\newtheorem{theorem}{Theorem}
\newtheorem{lemma}[theorem]{Lemma}
\title{A Latent Source Model for \\
Nonparametric Time Series Classification}
\author{George H.~Chen \\
        MIT \\ 
        \texttt{georgehc@mit.edu}
        \And
        Stanislav Nikolov \\
        Twitter \\
        \texttt{snikolov@twitter.com}
        \And
        Devavrat Shah \\
        MIT \\ 
        \texttt{devavrat@mit.edu}
        }
\begin{document}

\maketitle

\begin{abstract}
For classifying time series, a nearest-neighbor approach is widely used in
practice with performance often competitive with or better than more elaborate
methods such as neural networks, decision trees, and support vector machines.
We develop theoretical justification for the effectiveness of
nearest-neighbor-like classification of time series. Our guiding hypothesis
is that in many applications, such as forecasting which topics will become
trends on Twitter, there aren't actually that many prototypical time series to
begin with, relative to the number of time series we have access to, e.g.,
topics become trends on Twitter only in a few distinct manners whereas we can
collect massive amounts of Twitter data. To operationalize this hypothesis, we
propose a {\em latent source model} for time series, which naturally leads to
a ``weighted majority voting'' classification rule that can be approximated by
a nearest-neighbor classifier. We establish nonasymptotic performance
guarantees of both weighted majority voting and nearest-neighbor
classification under our model accounting for how much of the time series we
observe and the model complexity. Experimental results on synthetic data show
weighted majority voting achieving the same misclassification rate as
nearest-neighbor classification while observing less of the time series. We
then use weighted majority to forecast which news topics on Twitter become
trends, where we are able to detect such ``trending topics'' in advance of
Twitter 79\% of the time, with a mean early advantage of 1 hour and 26
minutes, a true positive rate of 95\%, and a false positive rate of 4\%.

\end{abstract}


\section{Introduction}

Recent years have seen an explosion in the availability of time series data
related to virtually every human endeavor --- data that demands to be analyzed
and turned into valuable insights. A key recurring task in mining this data is
being able to classify a time series. As a running example used throughout
this paper, consider a time series that tracks how much activity there is for
a particular news topic on Twitter. Given this time series up to present time,
we ask ``will this news topic go viral?'' Borrowing Twitter's terminology, we
label the time series a ``trend'' and call its corresponding news topic a
{\em trending topic} if the news topic goes viral; otherwise, the time series
has label ``not trend''. We seek to forecast whether a news topic will become
a trend {\it before} it is declared a trend (or not) by Twitter, amounting to
a binary classification problem. Importantly, we skirt the discussion of what
makes a topic considered trending as this is irrelevant to our mathematical
development.%
\footnote{While it is not public knowledge how Twitter defines a topic to be a
trending topic, Twitter does provide information for which topics are trending
topics. We take these labels to be ground truth, effectively treating how a
topic goes viral to be a black box supplied by Twitter.}
Furthermore, we remark that handling the case where a single time series can
have different labels at different times is beyond the scope of this paper.

Numerous standard classification methods have been tailored to classify time
series, yet a simple nearest-neighbor approach is hard to beat in terms of
classification performance on a variety of datasets \cite{xi_2006}, with
results competitive to or better than various other more elaborate methods
such as neural networks \cite{nanopoulos_2001}, decision trees
\cite{rodriguez_2004}, and support vector machines \cite{wu_2004}. More
recently, researchers have examined which distance to use with
nearest-neighbor classification 
\cite{gustavo_2011,ding_2008,weinberger_2009}
or how to boost classification performance by applying different
transformations to the time series before using nearest-neighbor
classification \cite{bagnall_2012}. These existing results are mostly
experimental, lacking theoretical justification for both when
nearest-neighbor-like time series classifiers should be expected to perform
well and how well.


If we don't confine ourselves to classifying time series, then as the amount
of data tends to infinity, nearest-neighbor classification has been shown to
achieve a probability of error that is at worst twice the Bayes error rate,
and when considering the nearest $k$ neighbors with $k$ allowed to grow with
the amount of data, then the error rate approaches the Bayes error rate
\cite{cover_1967}. However, rather than examining the asymptotic case where
the amount of data goes to infinity, we instead pursue {\em nonasymptotic}
performance guarantees in terms of how large of a training dataset we have and
how much we observe of the time series to be classified. To arrive at these
nonasymptotic guarantees, we impose a low-complexity structure on time series.


\textbf{Our contributions.} We present a model for which nearest-neighbor-like
classification performs well by operationalizing the following hypothesis: In
many time series applications, there are only a small number of prototypical
time series relative to the number of time series we can collect. For example,
posts on Twitter are generated by humans, who are often behaviorally
predictable in aggregate. This suggests that topics they post about only
become trends on Twitter in a few distinct manners, yet we have at our
disposal enormous volumes of Twitter data. In this context, we present a novel
{\em latent source model}: time series are generated from a small collection
of $m$ unknown latent sources, each having one of two labels, say ``trend'' or
``not trend''.
Our model's maximum a posteriori (MAP) time series classifier can be
approximated by weighted majority voting, which compares the time series to be
classified with each of the time series in the labeled training data. Each
training time series casts a weighted vote in favor of its ground truth label,
with the weight depending on how similar the time series being classified is
to the training example. The final classification is ``trend'' or ``not
trend'' depending on which label has the higher overall vote. The voting is
nonparametric in that it does not learn parameters for a model and is driven
entirely by the training data. The unknown latent sources are never estimated;
the training data serve as a proxy for these latent sources. Weighted majority
voting itself can be approximated by a nearest-neighbor classifier, which we
also analyze.


Under our model, we show sufficient conditions so that if we have
$n=\Theta(m \log \frac{m}{\delta})$ time series in our training data, then
weighted majority voting and nearest-neighbor classification correctly
classify a new time series with probability at least $1-\delta$ after
observing its first $\Omega(\log \frac{m}{\delta})$ time steps. As our
analysis accounts for how much of the time series we observe, our results
readily apply to the ``online'' setting in which a time series is to be
classified while it streams in (as is the case for forecasting trending
topics) as well as the ``offline'' setting where we have access to the entire
time series. Also, while our analysis yields matching error upper bounds for
the two classifiers, experimental results on synthetic data suggests that
weighted majority voting outperforms nearest-neighbor classification early on
when we observe very little of the time series to be classified. Meanwhile,
a specific instantiation of our model leads to a spherical Gaussian mixture
model, where the latent sources are Gaussian mixture components. We show that
existing performance guarantees on learning spherical Gaussian mixture models
\cite{dasgupta_2007,hsu_2013,vempala_wang}
require more stringent conditions
than what our results need, suggesting that learning the latent
sources is overkill if the goal is classification.

Lastly, we apply weighted majority voting to forecasting trending topics on
Twitter. We emphasize that our goal is {\em precognition} of trends:
predicting whether a topic is going to be a trend before it is actually
declared to be a trend by Twitter or, in theory, any other third party that we
can collect ground truth labels from. Existing work that identify trends on
Twitter \cite{Becker, Cataldi, Mathioudakis} instead, as part of their trend
detection, define models for what trends are, which we do not do, nor do we
assume we have access to such definitions. (The same could be said of previous
work on novel document detection on Twitter
\cite{kasiviswanathan_2011,kasiviswanathan_2012}.) In our experiments,
weighted majority voting is able to predict whether a topic will be a trend in
advance of Twitter 79\% of the time, with a mean early advantage of 1 hour and
26 minutes, a true positive rate of 95\%, and a false positive rate of~4\%.
We empirically find that the
Twitter activity of a news topic that becomes a trend tends to follow one of a
finite number of patterns, which could be thought of as latent sources. 

\textbf{Outline.} Weighted majority voting and nearest-neighbor classification
for time series are presented in Section~\ref{sec:problem}. We provide our
latent source model and theoretical performance guarantees of weighted
majority voting and nearest-neighbor classification under this model in
Section~\ref{sec:theory}.
Experimental results for synthetic data and forecasting trending topics on
Twitter are in
Section~\ref{sec:experiments}. 


\section{Weighted Majority Voting and Nearest-Neighbor Classification}
\label{sec:problem}

Given a time-series%
\footnote{We index time using $\mathbb{Z}$ for notationally convenience but
will assume time series to start at time step 1.} 
$s:\mathbb{Z}\rightarrow\mathbb{R}$, we want to classify it as having either
label $+1$ (``trend'') or $-1$ (``not trend''). To do so, we have access to
labeled training data $\mathcal{R}_+$ and $\mathcal{R}_-$, which denote the
sets of all training time series with labels $+1$ and $-1$ respectively.

\textbf{Weighted majority voting.}
Each positively-labeled example $r\in\mathcal{R}_+$ casts a weighted vote
$e^{-\gamma d^{(T)}(r,s)}$ for whether time series $s$ has label $+1$, where
$d^{(T)}(r,s)$ is some measure of similarity between the two time series $r$
and $s$, superscript $(T)$ indicates that we are only allowed to look at the
first $T$ time steps (i.e., time steps $1,2,\dots,T$) of $s$ (but we're
allowed to look outside of these time steps for the training time series $r$),
and constant $\gamma\ge0$ is a scaling parameter that determines the ``sphere
of influence'' of each example. Similarly, each negatively-labeled example in
$\mathcal{R}_-$ also casts a weighted vote for whether time series $s$ has
label $-1$. 

The similarity measure $d^{(T)}(r,s)$ could, for example, be squared
Euclidean distance:
$d^{(T)}(r,s)=\sum_{t=1}^T (r(t)-s(t))^2\triangleq\|r - s\|_T^2$.
However, this similarity measure only looks at the first $T$ time steps of
training time series $r$.
Since time series in our training data are known, we need not
restrict our attention to their first $T$ time steps.
Thus, we use the following similarity measure:
\begin{equation}
d^{(T)}(r,s)
=
\min_{\Delta\in\{-\Delta_{\max},\dots,0,\dots,\Delta_{\max}\}}
  \sum_{t=1}^T (r(t+\Delta)-s(t))^2
=
\min_{\Delta\in\{-\Delta_{\max},\dots,0,\dots,\Delta_{\max}\}}
  \|r*\Delta - s\|_T^2,
\end{equation}
where we minimize over integer time shifts with a pre-specified maximum
allowed shift $\Delta_{\max}\ge0$. Here, we have 
used $q * \Delta$ to denote time series $q$ advanced by $\Delta$ time steps,
i.e., $(q*\Delta)(t)=q(t+\Delta)$. 

Finally, we sum up all of the weighted $+1$ votes and then all of the weighted
$-1$ votes. The label with the majority of overall weighted votes is declared
as the label for $s$:
\begin{equation}
\widehat{L}^{(T)}(s;\gamma)
=\begin{cases}
 +1 & \text{if }
   \sum_{r\in\mathcal{R}_+}e^{-\gamma d^{(T)}(r,s)} \ge
   \sum_{r\in\mathcal{R}_-}e^{-\gamma d^{(T)}(r,s)}, \\
 -1 & \text{otherwise}.
\end{cases}
\label{eq:decision-rule-with-min}
\end{equation}
Using a larger time window size $T$ corresponds to waiting longer before we
make a prediction. We need to trade off how long we wait and how accurate we
want our prediction.
Note that $k$-nearest-neighbor classification
corresponds to only considering the $k$ nearest
neighbors of $s$ among all training time series; all other votes are set to 0.
With $k=1$, we obtain the
following classifier:

\textbf{Nearest-neighbor classifier.}
Let
$\widehat{r}
 =\arg\min_{r\in\mathcal{R}_+\cup\mathcal{R}_-} d^{(T)}(r,s)$
be the nearest neighbor of $s$. Then we declare the label for $s$ to be:
\begin{equation}
\widehat{L}_{NN}^{(T)}(s)
=\begin{cases}
 +1 & \text{if }\widehat{r}\in\mathcal{R}_+, \\
 -1 & \text{if }\widehat{r}\in\mathcal{R}_-.
\end{cases}
\label{eq:decision-rule-nn}
\end{equation}

\section{A Latent Source Model and Theoretical Guarantees}
\label{sec:theory}

We assume there to be $m$ unknown latent sources (time series) that
generate observed time series. Let~$\cal V$ denote the set of all such latent
sources; each latent source $v:\mathbb{Z}\rightarrow\mathbb{R}$ in
$\mathcal{V}$ has a true label $+1$ or $-1$. Let ${\cal V}_+\subset\cal V$
be the set of latent sources with label $+1$, and ${\cal V}_-\subset\cal V$ be
the set of those with label $-1$. The observed time series are
generated from latent sources as follows:
\begin{enumerate}

\item Sample latent source $V$ from ${\cal V}$ uniformly at random.%
  \footnote{While we keep the sampling uniform for clarity of presentation,
  our theoretical guarantees can easily be extended to the case where the
  sampling is not uniform. The only change is that the number of training
  data needed will be larger by a factor of $\frac{1}{m\pi_{\min}}$, where
  $\pi_{\min}$ is the smallest probability of a particular latent source
  occurring.}
  Let $L \in \{\pm1\}$ be the label of $V$.

\item Sample integer time shift $\Delta$ uniformly from
  $\{0,1,\dots,\Delta_{\max}\}$.

\item Output time series $S: \mathbb{Z}\rightarrow\mathbb{R}$ to be latent
  source~$V$ advanced by $\Delta$ time steps, followed by adding noise signal
  $E: \mathbb{Z}\rightarrow\mathbb{R}$, i.e., $S(t) = V(t + \Delta) + E(t)$.
  The label associated with the generated time series $S$ is the same as that
  of $V$, i.e., $L$. Entries of noise $E$ are i.i.d.~zero-mean sub-Gaussian
  with parameter $\sigma$, which means that for any time index $t$,
  \begin{equation}
  \mathbb{E}[\exp(\lambda E(t))]
    \le \exp\Big(\frac{1}{2}\lambda^2\sigma^2\Big)
  \qquad
  \text{for all }\lambda\in\mathbb{R}.
  \end{equation}
  The family of sub-Gaussian distributions includes a variety of
  distributions, such as a zero-mean Gaussian with standard deviation
  $\sigma$ and a uniform distribution over $[-\sigma, \sigma]$.

\end{enumerate}

The above generative process defines our latent source model. Importantly, we
make no assumptions about the structure of the latent sources. For instance,
the latent sources could be tiled as shown in Figure~\ref{fig:interlace},
where they are evenly separated vertically and alternate between the two
different classes $+1$ and $-1$. With a parametric model like a $k$-component
Gaussian mixture model, estimating these latent sources could be problematic.
For example, if we take any two adjacent latent sources with label $+1$ and
cluster them, then this cluster could be confused with the latent source
having label $-1$ that is sandwiched in between. 
Noise only complicates estimating the latent sources.
In this example, the $k$-component
Gaussian mixture model needed for label $+1$ would require~$k$ to be the exact
number of latent sources with label $+1$, which is unknown. In general, the
number of samples we need from a Gaussian mixture mixture model to estimate
the mixture component means is exponential in the number of mixture components
\cite{moitra_2010}. As we discuss next, for classification, we 
sidestep
learning the latent sources altogether, instead using training data as a proxy
for latent sources. At the end of this section, we compare our sample
complexity for classification versus some
existing sample complexities for learning Gaussian mixture models.

\begin{figure}
\centering
\includegraphics[width=1.7in]{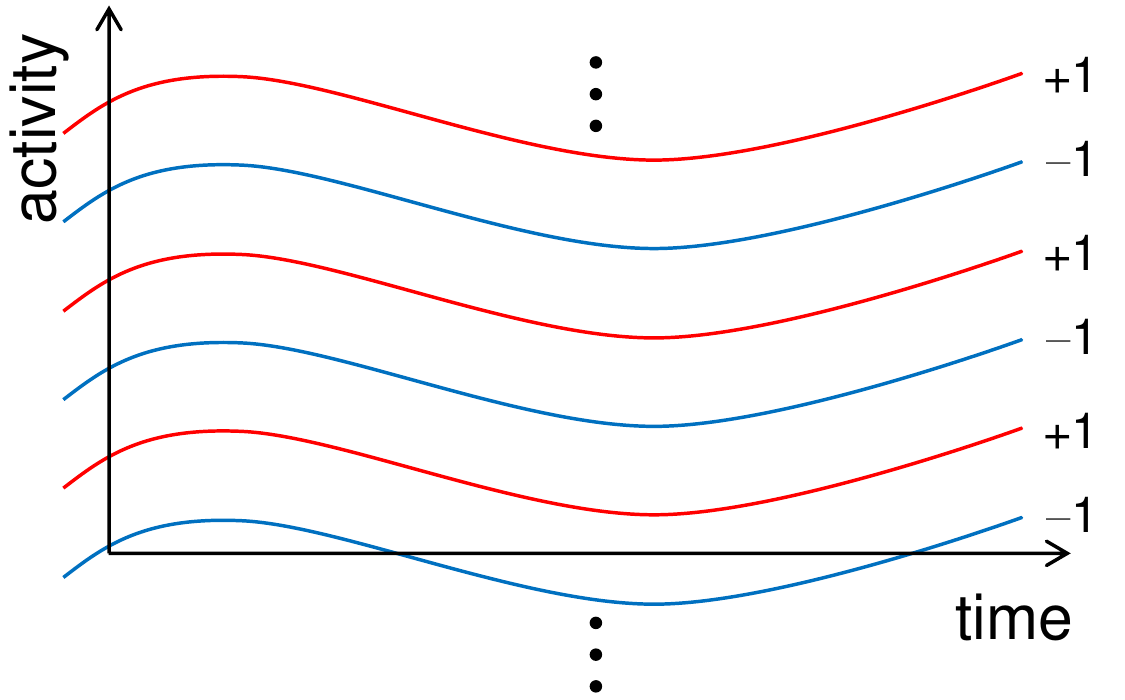}
\caption{Example of latent sources superimposed, where each latent source is
  shifted vertically in amplitude such that every other latent source has
  label $+1$ and the rest have label $-1$.}
\label{fig:interlace}
\end{figure}

\textbf{Classification.} If we knew the latent sources and if noise entries $E(t)$
were i.i.d.~$\mathcal{N}(0,\frac{1}{2\gamma})$ across $t$, then the
{\em maximum a posteriori} (MAP) estimate for label $L$ given an observed time
series $S=s$ is
\begin{equation}
\widehat{L}_{\text{MAP}}^{(T)}(s;\gamma)
=\begin{cases}
   +1 & \text{if }\Lambda_{\text{MAP}}^{(T)}(s;\gamma) \ge 1,\\
   -1 & \text{otherwise},
 \end{cases}
\label{eq:decision-rule}
\end{equation}
where
\begin{equation}
\Lambda_{\text{MAP}}^{(T)}(s;\gamma)
\triangleq
  \frac{\sum_{v_+\in\mathcal{V}_+}
        \sum_{\Delta_+\in {\cal D}_+}
        \exp\big(-\gamma \|v_+*\Delta_+ - s\|_T^2\big)}
       {\sum_{v_-\in\mathcal{V}_-}
        \sum_{\Delta_-\in {\cal D}_+}
   \exp\big( -\gamma \|v_-*\Delta_- - s\|^2_T \big)},
\label{eq:likelihood-ratio-MAP}
\end{equation}
and ${\cal D}_+  \triangleq \{0,\dots, \Delta_{\text{max}}\}$.

However, we do not know the latent sources, nor do we know if the noise is
i.i.d.~Gaussian. We assume that we have access to training data as given in
Section~\ref{sec:problem}. We make a further assumption that the training data
were sampled from the latent source model and that we have $n$ different
training time series. 
Denote
${\cal D} \triangleq
 \{-\Delta_{\text{max}}, \dots, 0,\dots, \Delta_{\text{max}}\}$.
Then we approximate the MAP classifier by using training data as a proxy for
the latent sources. Specifically, we take
ratio~\eqref{eq:likelihood-ratio-MAP}, replace the inner sum by a minimum in
the exponent, replace $\mathcal{V}_+$ and $\mathcal{V}_-$ by
$\mathcal{R}_+$ and $\mathcal{R}_-$, and replace $\mathcal{D}_+$ by
$\mathcal{D}$ to obtain the ratio:
\begin{equation}
\label{eq:est}
\Lambda^{(T)}(s;\gamma)
\triangleq
  \frac{\sum_{r_+\in\mathcal{R}_+}
          \exp\big(-\gamma
                    \big(
                      \min_{\Delta_+\in {\cal D}} \|r_+*\Delta_+ - s\|_T^2
                    \big)
              \big)}
       {\sum_{r_-\in\mathcal{R}_-}
          \exp\big(-\gamma
                    \big(
                      \min_{\Delta_-\in {\cal D}} \|r_-*\Delta_- - s\|^2_T
                    \big)
              \big)}.
\end{equation}
Plugging $\Lambda^{(T)}$ in place of $\Lambda_{\text{MAP}}^{(T)}$ in
classification rule~\eqref{eq:decision-rule} yields the weighted majority
voting rule~\eqref{eq:decision-rule-with-min}. Note that weighted majority
voting could be interpreted as a {\em smoothed} nearest-neighbor approximation
whereby we only consider the time-shifted version of each example time series
that is closest to the observed time series $s$. If we didn't replace the
summations over time shifts with minimums in the exponent, then we have a
kernel density estimate in the numerator and in the denominator
\cite[Chapter~7]{fukunaga_1990} (where the kernel is Gaussian) and our main
theoretical result for weighted majority voting to follow would still hold
using the same proof.\footnote{We use a minimum rather a summation over time
shifts to make the method more similar to existing time series classification
work (e.g., \cite{xi_2006}), which minimize over time warpings rather than
simple shifts.}

Lastly, applications may call for trading off true and false positive rates.
We can do this by generalizing decision rule \eqref{eq:decision-rule} to
declare the label of $s$ to be $+1$ if $\Lambda^{(T)}(s,\gamma)\ge\theta$ and
vary parameter $\theta>0$. The resulting decision rule, which we refer to
as {\em generalized weighted majority voting}, is thus:
\begin{equation}
\widehat{L}_\theta^{(T)}(s;\gamma)
=\begin{cases}
 +1 & \text{if }\Lambda^{(T)}(s,\gamma)\ge\theta, \\
 -1 & \text{otherwise},
\end{cases}
\label{eq:decision-rule-general}
\end{equation}
where setting $\theta=1$ recovers the usual weighted majority voting
\eqref{eq:decision-rule-with-min}. This modification to the classifier can be
thought of as adjusting the priors on the relative sizes of the two classes.
Our theoretical results to follow actually cover this more general case rather
than only that of $\theta=1$.

\textbf{Theoretical guarantees.} We now present the main theoretical results
of this paper which identify sufficient conditions under which generalized
weighted majority voting \eqref{eq:decision-rule-general} and nearest-neighbor
classification~\eqref{eq:decision-rule-nn} can classify a time series
correctly with high probability, accounting for the size of the training
dataset and how much we observe of the time series to be classified.
First, we define the ``gap'' between ${\cal R}_+$
and ${\cal R}_-$ restricted to time length $T$ and with maximum time shift
$\Delta_{\max}$ as:
\begin{equation}\label{eq:gap}
G^{(T)}(\mathcal{R}_+,\mathcal{R}_-,\Delta_{\max})
\triangleq
  \min_{\substack{r_{+}\in\mathcal{R}_+, r_{-}\in\mathcal{R}_-, \\
                  \Delta_{+}, \Delta_{-} \in {\cal D}}}
    \|r_+ * \Delta_+ - r_{-} * \Delta_{-}\|^2_T.
\end{equation}
This quantity measures how far apart the two different classes are if we only
look at length-$T$ chunks of each time series and allow all shifts of at most
$\Delta_{\max}$ time steps in either direction.

Our first main result is stated below. We defer proofs
for this section to Appendices \ref{sec:proof} and \ref{sec:proof-nn}.

\begin{theorem}
\label{thm:wmv-main-result}
(Performance guarantee for generalized weighted majority voting)
Let $m_+=|{\cal V}_+|$ be the number of latent sources with label $+1$, and
$m_-=|{\cal V}_-|=m-m_+$ be the number of latent sources with label $-1$.
For any $\beta>1$, under the latent source model with $n > \beta m \log m$
time series in the training data, the probability of misclassifying time
series $S$ with label $L$ using generalized weighted majority voting
$\widehat{L}_\theta^{(T)}(\cdot;\gamma)$ satisfies the bound
\begin{align}
&\mathbb{P}(\widehat{L}_\theta^{(T)}(S;\gamma)\ne L) \nonumber \\
&\le
    \Big( \frac{\theta m_+}{m} + \frac{m_-}{\theta m} \Big)
    (2\Delta_{\max}+1) n
    \exp\big(-(\gamma-4\sigma^2\gamma^2)
              G^{(T)}(\mathcal{R}_+,\mathcal{R}_-,\Delta_{\max})
        \big) + m^{-\beta + 1}.
\label{eq:wmv-main-bound}
\end{align}
\end{theorem}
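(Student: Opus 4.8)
The plan is to separate two independent sources of error: the chance that the random training set fails to represent every latent source, and, given good representation, the chance that the weighted vote comes out wrong. For the first, since each of the $n$ training series is drawn from a source chosen uniformly among the $m$ sources, the probability that a fixed source is never drawn is $(1-1/m)^n \le e^{-n/m} < m^{-\beta}$ whenever $n > \beta m \log m$; a union bound over the $m$ sources shows that with probability at least $1 - m^{-\beta+1}$ every latent source appears at least once in the training data. This accounts exactly for the additive $m^{-\beta+1}$ term, so I would condition on this ``coverage'' event for the remainder of the argument. I would then split the misclassification probability according to the true label $L$, which equals $+1$ with probability $m_+/m$ and $-1$ with probability $m_-/m$; the two cases are symmetric under $\theta \leftrightarrow 1/\theta$, so these prior probabilities together with the threshold-dependent factors are what assemble into the coefficient $\frac{\theta m_+}{m} + \frac{m_-}{\theta m}$.

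Focusing on $L=+1$, a misclassification means $\Lambda^{(T)}(S;\gamma) < \theta$, i.e. $\sum_{r_+\in\mathcal{R}_+} w_{r_+} < \theta \sum_{r_-\in\mathcal{R}_-} w_{r_-}$, where $w_r = \exp(-\gamma \min_{\Delta\in\mathcal{D}} \|r*\Delta - S\|_T^2)$. I would bound this probability by Markov's inequality applied to the ratio $\sum_{r_-} w_{r_-} / \sum_{r_+} w_{r_+}$, which is where the multiplicative factor $\theta$ enters. To control this ratio I would lower-bound the numerator by the single vote cast by the training representative $r_V \in \mathcal{R}_+$ of the latent source $V$ that actually generated $S$ (so that $S = V*\Delta^* + E$ for the generating shift $\Delta^*$ and noise $E$; such an $r_V$ exists by coverage), evaluated at the shift $\Delta_0$ that best aligns $r_V$ with $V*\Delta^*$; and I would upper-bound the denominator by replacing the minimum-in-the-exponent with a sum over all $2\Delta_{\max}+1$ admissible shifts in $\mathcal{D}$ and over all of $\mathcal{R}_-$. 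This replacement is exactly what produces the combinatorial factor $(2\Delta_{\max}+1)n$, using $|\mathcal{R}_-|\le n$.

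Each resulting term is a ratio $\exp(\gamma\|r_V*\Delta_0 - S\|_T^2 - \gamma\|r_-*\Delta - S\|_T^2)$, whose exponent, after the quadratic-in-$S$ pieces are expanded, is affine in the test noise $E$ with a deterministic part governed by the squared distance between a $+1$ and a $-1$ training example. Taking the expectation over $E$ and invoking the sub-Gaussian moment generating function assumption collapses the linear term into a variance-proxy contribution, while the latent-source offset $V*\Delta^*$ cancels between the two quadratics so that the surviving squared distance is $\|r_V*\Delta_0 - r_-*\Delta\|_T^2 \ge G^{(T)}(\mathcal{R}_+,\mathcal{R}_-,\Delta_{\max})$. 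This is precisely the step that yields a factor $\exp(-(\gamma - 4\sigma^2\gamma^2)G^{(T)})$ per term; summing over the $(2\Delta_{\max}+1)n$ terms and multiplying by the Markov factor $\theta$ gives the $L=+1$ contribution, and the mirror-image computation for $L=-1$ gives the $1/\theta$ contribution, which together yield the stated bound.

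The hardest part will be the middle step: simultaneously pinning the correct-class vote from below and the wrong-class votes from above while forcing every squared distance that appears to be lower-bounded by the single gap quantity $G^{(T)}$, pivoting through either $S$ or the source $V*\Delta^*$ via the triangle inequality. The real subtlety is to carry out the sub-Gaussian MGF computation so that the contributions of both the test-side noise $E$ and the training-side noise in $r_V$ are absorbed into the clean exponent $\gamma - 4\sigma^2\gamma^2$ without leaving behind any stray additive noise-energy term. Getting this cancellation and the constant $4$ correct, rather than the routine union bounds, is the crux of the proof.
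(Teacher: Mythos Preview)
Your plan is essentially the paper's proof: coupon collector for the $m^{-\beta+1}$ term, split by $L$, Markov's inequality on $1/\Lambda^{(T)}$, keep only the representative $r_V$'s vote in the denominator, sum over all $(2\Delta_{\max}+1)$ shifts in the numerator, expand the quadratics, and apply the sub-Gaussian MGF to land on $\exp\big(-(\gamma-4\sigma^2\gamma^2)G^{(T)}\big)$. For the step you flag as the crux, the paper avoids juggling two separate noises (and any triangle inequality) by first rewriting $S = r_V*\Delta_0 + \widetilde{E}$ with $\widetilde{E}$ the \emph{combined} test-minus-shifted-training noise, which is itself zero-mean sub-Gaussian with parameter $\sqrt{2}\sigma$; then the representative's vote is exactly $\exp(-\gamma\|\widetilde{E}\|_T^2)$, this $\|\widetilde{E}\|_T^2$ cancels in every ratio term, the latent source never appears, and a single MGF over $\widetilde{E}$ produces the constant $4$ directly.
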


An immediate consequence is that given error tolerance $\delta \in (0,1)$ and
with choice $\gamma \in (0, \frac{1}{4\sigma^2})$, then upper bound
\eqref{eq:wmv-main-bound} is at most $\delta$ (by having each
of the two terms on the right-hand side be $\le\frac{\delta}{2}$) if
$n> m\log \frac{2m}{\delta}$ (i.e., $\beta=1+\log\frac{2}{\delta}/\log{m}$),
and
\begin{equation}
G^{(T)}(\mathcal{R}_+,\mathcal{R}_-,\Delta_{\max})
\ge
\frac{
  \log ( \frac{\theta m_+}{m} + \frac{m_-}{\theta m} )
  + \log (2\Delta_{\max}+1)
  + \log n
  + \log \frac{2}{\delta}
}
{
  \gamma-4\sigma^2\gamma^2
}.
\end{equation}
This means that if we have access to a large enough pool of labeled time
series, i.e., the pool has $\Omega(m \log\frac{m}{\delta})$ time series, then
we can subsample $n=\Theta(m \log\frac{m}{\delta})$ of them to use as training
data. Then with choice $\gamma=\frac{1}{8\sigma^2}$, generalized weighted
majority voting \eqref{eq:decision-rule-general} correctly classifies a new
time series $S$ with probability at least $1-\delta$ if%
\begin{equation}
G^{(T)}(\mathcal{R}_+,\mathcal{R}_-,\Delta_{\max})
= \Omega\bigg(
          \sigma^2
          \Big(
          \log \Big( \frac{\theta m_+}{m} + \frac{m_-}{\theta m} \Big)
          + \log (2\Delta_{\max}+1)
          + \log \frac{m}{\delta}
          \Big)
        \bigg).
\end{equation}

Thus, the gap between sets ${\cal R}_+$ and ${\cal R}_-$ needs to grow
logarithmic in the number of latent sources $m$ in order for weighted majority
voting to classify correctly with high probability. Assuming that the original
unknown latent sources are separated (otherwise, there is no hope to
distinguish between the classes using any classifier) and the gap in the
training data grows as
$G^{(T)}(\mathcal{R}_+,\mathcal{R}_-,\Delta_{\max})=\Omega(\sigma^2 T)$
(otherwise, the closest two training time series from opposite classes are
within noise of each other), then observing the first
$T=\Omega(\log (\theta+\frac{1}{\theta})
          + \log (2\Delta_{\max}+1)
          + \log \frac{m}{\delta})$
time steps from the time series is sufficient to classify it correctly with
probability at least $1-\delta$.

A similar result holds for the nearest-neighbor classifier
\eqref{eq:decision-rule-nn}.

\begin{theorem}
\label{thm:nn-main-result}
(Performance guarantee for nearest-neighbor classification)
For any $\beta>1$, under the latent source model with $n > \beta m \log m$
time series in the training data, the probability of misclassifying time
series $S$ with label $L$ using the nearest-neighbor classifier
$\widehat{L}_{NN}^{(T)}(\cdot)$ satisfies the bound
\begin{equation}
\mathbb{P}(\widehat{L}_{NN}^{(T)}(S)\ne L)
\le
    (2\Delta_{\max}+1)
    n
    \exp\Big(-\frac{1}{16\sigma^2}
              G^{(T)}(\mathcal{R}_+,\mathcal{R}_-,\Delta_{\max})
        \Big)
    + m^{-\beta + 1}.
\label{eq:nn-main-bound}
\end{equation}
\end{theorem}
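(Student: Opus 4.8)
The plan is to run the same two-part argument that yields Theorem~\ref{thm:wmv-main-result} --- a coupon-collector ``covering'' bound that produces the $m^{-\beta+1}$ term, plus a per-pair noise concentration bound that produces the exponential term --- while exploiting one structural simplification: the nearest-neighbor rule \eqref{eq:decision-rule-nn} carries no tunable parameter $\gamma$, so I am free to introduce $\gamma$ purely as a Chernoff parameter and optimize it at the very end. Throughout I would condition on the true label and, by symmetry, treat the case $L=+1$, averaging the two label cases against their prior weights $m_+/m$ and $m_-/m$ at the finish.

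First I would set up the covering event $\mathcal{A}$ that every latent source is represented in the training set. Since each of the $n$ training series is generated from a source drawn uniformly from the $m$ sources, a fixed source is missed with probability $(1-1/m)^n\le e^{-n/m}$, so a union bound over the $m$ sources gives $\mathbb{P}(\mathcal{A}^c)\le m\,e^{-n/m}\le m^{-\beta+1}$ whenever $n>\beta m\log m$; this is the second term of \eqref{eq:nn-main-bound}. On $\mathcal{A}$ the source $V$ that generated $S$ has at least one correctly-labeled representative $r^\star\in\mathcal{R}_+$, and writing $\Delta^\dagger$ for the shift aligning $r^\star$ with $S$ (legal, since the two generating shifts lie in $\{0,\dots,\Delta_{\max}\}$, so their difference lies in $\mathcal{D}$), the clean signal cancels and $\|r^\star*\Delta^\dagger-S\|_T^2$ collapses to a pure-noise sum.

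Next I would reduce a nearest-neighbor error to a family of pairwise ``swap'' events. If $\widehat{L}_{NN}^{(T)}(S)=-1$ while $L=+1$, then some negative example must be at least as close to $S$ as $r^\star$, i.e.\ there exist $r_-\in\mathcal{R}_-$ and $\Delta\in\mathcal{D}$ with $\|r_-*\Delta-S\|_T^2\le\|r^\star*\Delta^\dagger-S\|_T^2$. Union-bounding over the at most $n$ opposite-class examples and the $2\Delta_{\max}+1$ shifts yields the prefactor $(2\Delta_{\max}+1)n$; unlike the weighted-majority analysis, Markov's inequality is applied directly to each swap event rather than to a thresholded vote ratio, so the factors $\theta,\tfrac1\theta$ never enter and only the prior weights survive, collapsing to $\tfrac{m_+}{m}+\tfrac{m_-}{m}=1$ (equivalently, this is the $\theta=1$ instance of \eqref{eq:wmv-main-bound}, where $\tfrac{\theta m_+}{m}+\tfrac{m_-}{\theta m}=1$). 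For each fixed pair I would bound $\mathbb{P}(\text{swap})\le\mathbb{E}\exp\!\big(\gamma\|r^\star*\Delta^\dagger-S\|_T^2-\gamma\|r_-*\Delta-S\|_T^2\big)$ for any $\gamma>0$. Expanding the squared norms, the test-noise square terms cancel, leaving a deterministic separation bounded below by the gap, $\|r_-*\Delta-r^\star*\Delta^\dagger\|_T^2\ge G^{(T)}(\mathcal{R}_+,\mathcal{R}_-,\Delta_{\max})$, together with linear noise terms whose sub-Gaussian moment generating functions contribute the $4\sigma^2\gamma^2$ correction, exactly as in Theorem~\ref{thm:wmv-main-result}; the per-pair bound is thus $\exp\!\big(-(\gamma-4\sigma^2\gamma^2)G^{(T)}\big)$ for every $\gamma\in(0,\tfrac{1}{4\sigma^2})$.

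Finally, since the classifier does not depend on $\gamma$, I would choose $\gamma$ to maximize $\gamma-4\sigma^2\gamma^2$, namely the admissible interior point $\gamma=\tfrac{1}{8\sigma^2}$, giving exponent $\tfrac{1}{16\sigma^2}$ and hence \eqref{eq:nn-main-bound}. The main obstacle is the per-pair moment-generating-function computation: one must cleanly separate the fixed, gap-lower-bounded signal separation from the cross terms that couple the test noise $E$ to the training noise of $r^\star$ and $r_-$, and verify that after taking expectations these terms assemble into precisely the coefficient $\gamma-4\sigma^2\gamma^2$ with \emph{no} residual dependence on the window length $T$. It is exactly this need for a $T$-independent exponent that forces the Chernoff/MGF route rather than a direct chi-squared-type tail bound on $\|r^\star*\Delta^\dagger-S\|_T^2$, whose tail would carry an unwanted $T/2$ factor; tracking the constant through this step is where I expect the real care to be required.
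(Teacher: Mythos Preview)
Your proposal is correct and follows essentially the same route as the paper: a coupon-collector bound for the $m^{-\beta+1}$ term, representing $S=r^\star*\Delta^\dagger+E$ with $E$ sub-Gaussian of parameter $\sqrt{2}\sigma$, reducing a nearest-neighbor error to a union of swap events over $(r_-,\Delta)\in\mathcal{R}_-\times\mathcal{D}$, applying a Chernoff/MGF bound that yields $\exp\!\big(-(\gamma-4\sigma^2\gamma^2)G^{(T)}\big)$ per pair, and optimizing at $\gamma=\tfrac{1}{8\sigma^2}$. The only cosmetic difference is that the paper first rewrites the swap inequality as $2\langle \widehat r*\widehat\Delta-r^\star*\Delta^\star,E\rangle_T\ge\|\widehat r*\widehat\Delta-r^\star*\Delta^\star\|_T^2$ before exponentiating, whereas you exponentiate the difference of squared norms directly; these are algebraically identical, and your remark about ``training noise of $r_-$'' is unnecessary since $r_-$ is treated as a fixed observed point and its noise is already absorbed into the definition of $G^{(T)}$.
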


Our generalized weighted majority voting bound
\eqref{eq:wmv-main-bound} with $\theta=1$ (corresponding to regular weighted
majority voting) and $\gamma=\frac{1}{8\sigma^2}$ matches our nearest-neighbor
classification bound, suggesting that the two methods have similar behavior
when the gap grows with $T$.
%
%
In practice, we find weighted majority voting to outperform nearest-neighbor
classification when $T$ is small, and then as $T$ grows large, the two methods
exhibit similar performance in agreement with our theoretical analysis. For
small $T$, it could still be fairly likely that the nearest neighbor found has
the wrong label, dooming the nearest-neighbor classifier to failure. Weighted
majority voting, on the other hand, can recover from this situation as there
may be enough correctly labeled training time series close by that contribute
to a higher overall vote for the correct class. This robustness of weighted
majority voting makes it favorable in the online setting where we want to make
a prediction as early as possible. 

{\bf Sample complexity of learning the latent sources.}
If we can estimate the latent sources accurately, then we could plug these
estimates in place of the true latent sources in the MAP classifier and
achieve classification performance close to optimal. If we restrict the noise
to be Gaussian and assume $\Delta_{\max}=0$, then the latent source model
corresponds to a spherical Gaussian mixture model. We could learn such a model
using Dasgupta and Schulman's modified EM algorithm \cite{dasgupta_2007}.
Their theoretical guarantee depends on the true separation between the closest
two latent sources, namely
$G^{(T)*} \triangleq \min_{v,v'\in{\cal V}\text{ s.t.~}v\ne v'} \|v-v'\|_2^2$,
which needs to satisfy $G^{(T)*} \gg \sigma^2 \sqrt{T}$. Then with number of
training time series
$n=\Omega( \max\{1, \frac{\sigma^2 T}{G^{(T)*}}\} m\log\frac{m}{\delta} )$,
gap $G^{(T)*} = \Omega( \sigma^2 \log\frac{m}{\varepsilon} )$, and number of
initial time steps observed
\begin{equation}
T
=
\Omega\bigg(
        \max\bigg\{ 1,
                    \frac{\sigma^{4}T^{2}}
                         {(G^{(T)*})^2}
            \bigg\}
        \log
        \bigg[\frac{m}{\delta}
              \max\bigg\{ 1,
                          \frac{\sigma^{4}T^{2}}
                               {(G^{(T)*})^2}
                  \bigg\}
        \bigg]
      \bigg),
\end{equation}
their algorithm achieves, with probability at least $1-\delta$, an additive
$\varepsilon\sigma\sqrt{T}$ error (in Euclidean distance) close to optimal in
estimating every latent source. In contrast, our result is in terms of gap
$G^{(T)}(\mathcal{R}_+,\mathcal{R}_-,\Delta_{\max})$ that depends not on the
true separation between two latent sources but instead on the minimum observed
separation in the training data between two time series of opposite labels. In
fact, our gap, in their setting, grows as $\Omega(\sigma^2 T)$ even when their
gap $G^{(T)*}$ grows sublinear in $T$. In particular, while their result
cannot handle the regime where $O(\sigma^2\log\frac{m}{\delta})
\le G^{(T)*} \le \sigma^2 \sqrt{T}$, ours can, using
$n=\Theta(m\log\frac{m}{\delta})$ training time series and observing the first
$T=\Omega(\log\frac{m}{\delta})$ time steps to classify a time series
correctly with probability at least $1-\delta$; see Appendix
\ref{sec:gaussian-classification} for details.

Vempala and Wang \cite{vempala_wang} have a spectral method for learning
Gaussian mixture models that can handle smaller $G^{(T)*}$ than Dasgupta and
Schulman's approach but requires 
$n=\widetilde{\Omega}(T^3 m^2)$ training data, where we've hidden the
dependence on $\sigma^2$ and other variables of interest for clarity of
presentation. Hsu and Kakade \cite{hsu_2013} have a moment-based estimator that
doesn't have a gap condition but, under a different non-degeneracy condition,
requires substantially more samples for our problem setup, i.e.,
$n=\Omega((m^{14}+Tm^{11})/\varepsilon^2)$ to
achieve an $\varepsilon$ approximation of the mixture components. 
These results need substantially more training data than
what we've shown is sufficient for classification.

To fit a Gaussian mixture model to massive training datasets, in practice,
using all the training data could be prohibitively expensive. In such
scenarios, one could instead non-uniformly subsample
$\mathcal{O}(Tm^3/\varepsilon^2)$ time series from the training data using the
procedure given in~\cite{feldman_2011} and then feed the resulting smaller
dataset, referred to as an $(m,\varepsilon)$-{\em coreset}, to the EM
algorithm for learning the latent sources. This procedure still requires more
training time series than needed for classification and lacks a guarantee that
the estimated latent sources will be close to the true latent sources.



\section{Experimental Results}
\label{sec:experiments}


\begin{figure}[t]
\centering
\subfloat[][]{
\includegraphics[width=2.6in]{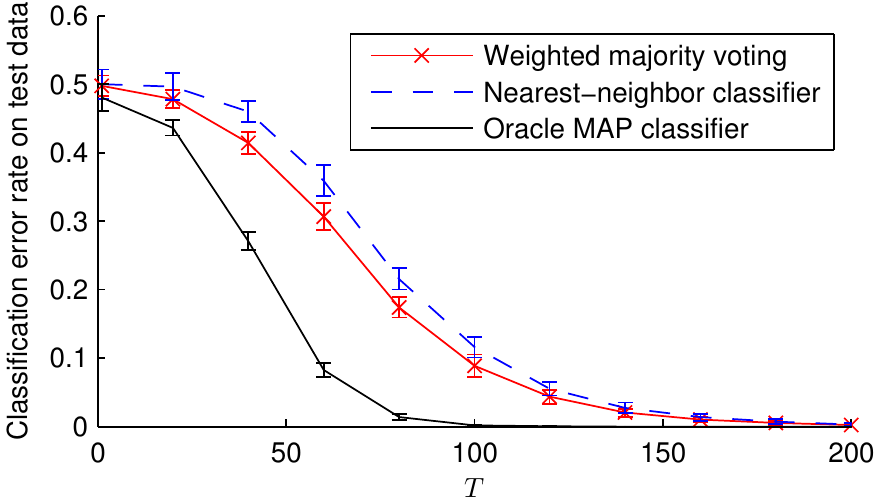}
\label{fig:error-vs-T}
}
\subfloat[][]{
\includegraphics[width=2.6in]{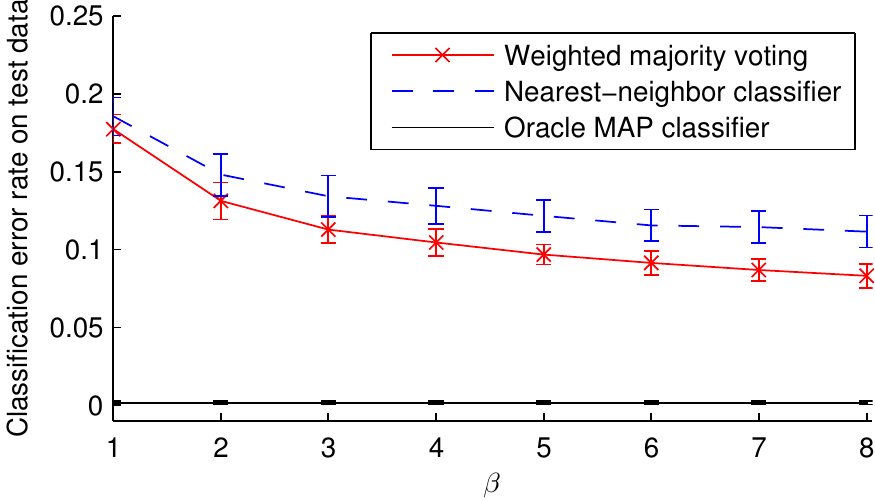}
\label{fig:error-vs-beta}
}
\caption{
         Results on synthetic data.
         (a)
         Classification error rate vs.~number of initial time steps $T$
         used; training set size: $n=\beta m\log m$ 
         where $\beta=8$.
         (b)
         Classification error rate at $T=100$ vs.~$\beta$.
         All experiments were repeated 20 times with newly generated latent
         sources, training data, and test data each time. Error bars denote
         one standard deviation above and below the mean value.}
\label{fig:synth-data}
\end{figure}
\begin{figure}[t]
\centering
\includegraphics[scale=.6, clip=true, trim=3em 1.6em 3em 2em]{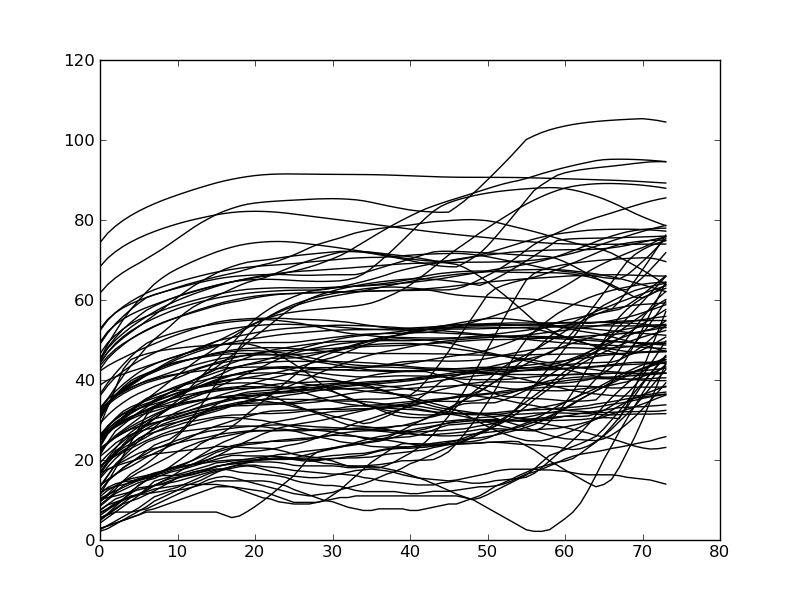}
\includegraphics[scale=.6, clip=true, trim=3em 1.6em 3em 2em]{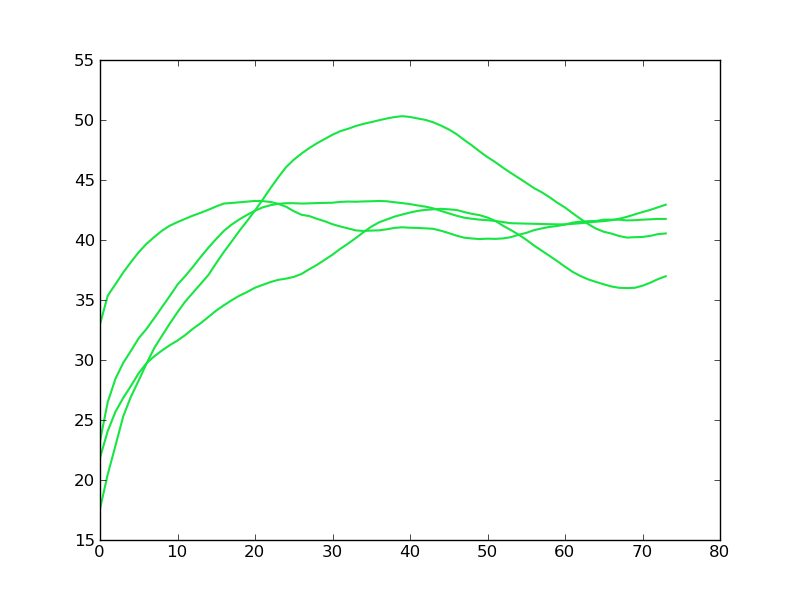}
\includegraphics[scale=.6, clip=true, trim=3em 1.6em 3em 2em]{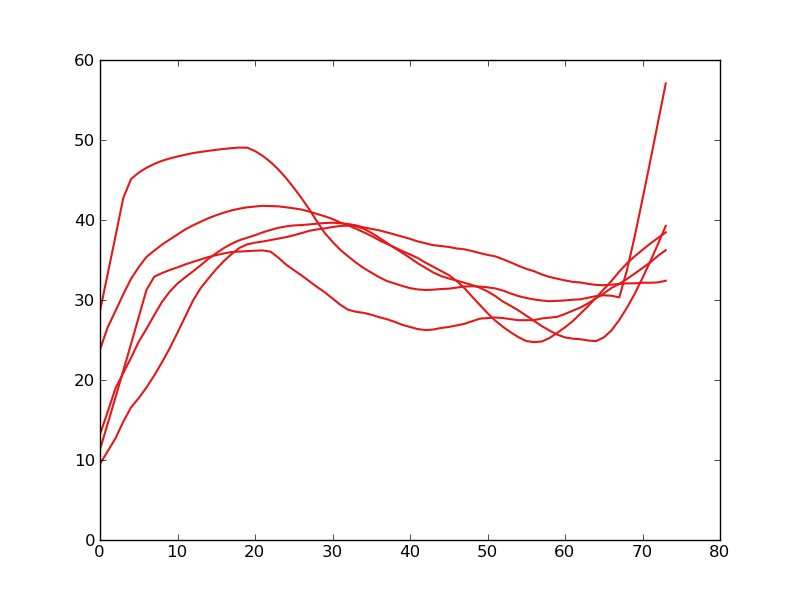}
\hspace{.5in}~~~ \\
\includegraphics[scale=.6, clip=true, trim=3em 1.6em 3em 2em]{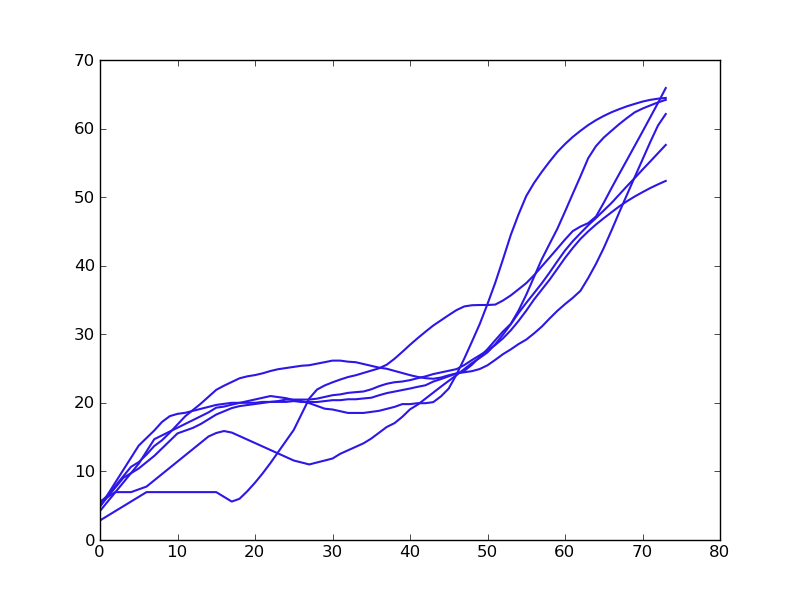}
\includegraphics[scale=.6, clip=true, trim=3em 1.6em 3em 2em]{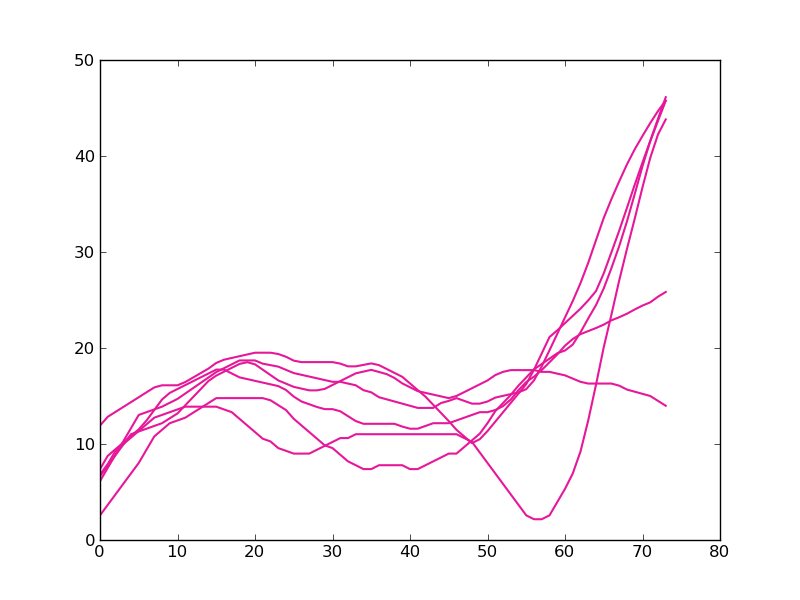}
\includegraphics[scale=.6, clip=true, trim=3em 1.6em 3em 2em]{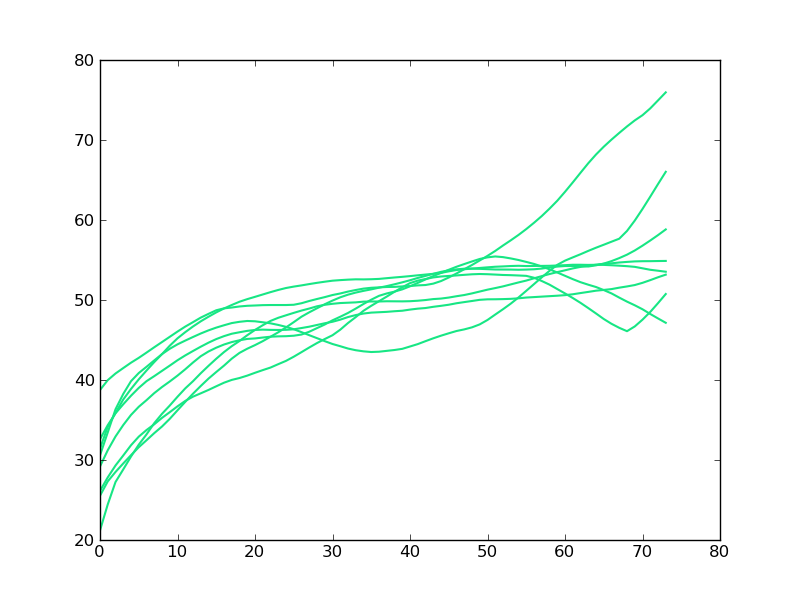}
\includegraphics[width=.6in]{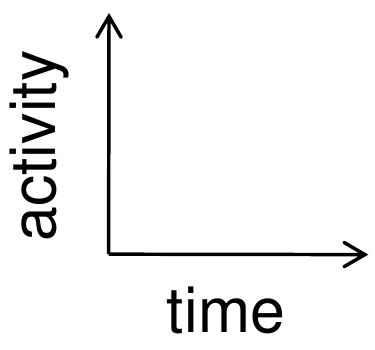}
\caption{How news topics become trends on Twitter.
The top left shows some time series of activity leading up to a news
topic becoming trending. These time series superimposed look like clutter, but
we can separate them into different clusters, as shown in the next five plots.
Each cluster represents a ``way'' that a news topic becomes trending.}
\label{fig:clusters}
\end{figure}
\textbf{Synthetic data.}
We generate $m=200$ latent sources, where each latent source is constructed by
first sampling i.i.d.~$\mathcal{N}(0,100)$ entries per time step and then
applying a 1D Gaussian smoothing filter with scale parameter 30. Half of the
latent sources are labeled $+1$ and the other half $-1$. Then
$n=\beta m\log m$ training time series are sampled as per the latent source
model where the noise added is i.i.d.~$\mathcal{N}(0,1)$ and
$\Delta_{\max}=100$. We similarly generate 1000 time series to use as test
data. We set $\gamma=1/8$ for weighted majority voting. For $\beta=8$, we
compare the classification error rates on test data for weighted majority
voting, nearest-neighbor classification, and the MAP classifier with oracle
access to the true latent sources as shown in
Figure~\ref{fig:synth-data}\subref{fig:error-vs-T}. We see that weighted majority voting outperforms
nearest-neighbor classification but as $T$ grows large, the two methods'
performances converge to that of the MAP classifier. Fixing $T=100$, we then
compare the classification error rates of the three methods using varying
amounts of training data, as shown in
Figure~\ref{fig:synth-data}\subref{fig:error-vs-beta}; the oracle MAP
classifier is also shown but does not actually depend on training data. We see
that as $\beta$ increases, both weighted majority voting and
nearest-neighbor classification steadily improve in performance.


\begin{figure}[t]
\centering
\subfloat[][]{
\includegraphics[width=3in, clip=true, trim=.1in .1in .1in .54in]{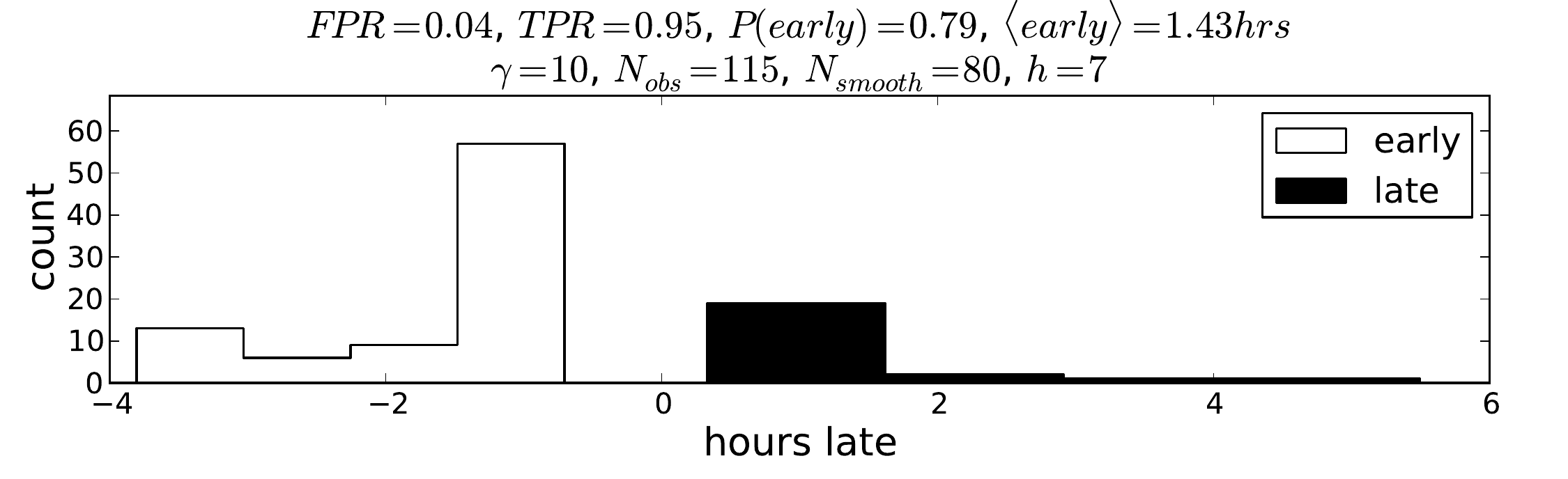}
\label{fig:early}
}~~
\subfloat[][]{
\includegraphics[height=1.1in, clip=true, trim=.1in .1in .1in .1in]{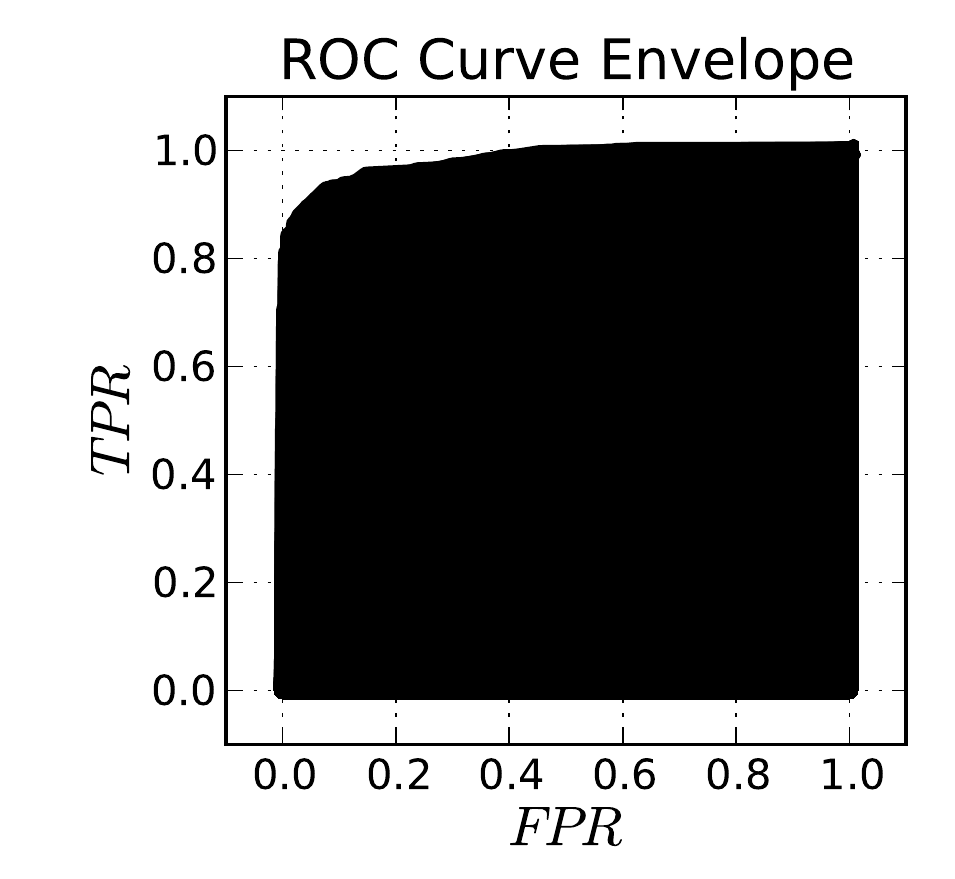}
\label{fig:roc}
}
\\
\subfloat[][]{
\includegraphics[width=2.73in, clip=true, trim=0.2in 7.4in 0.65in 0in]{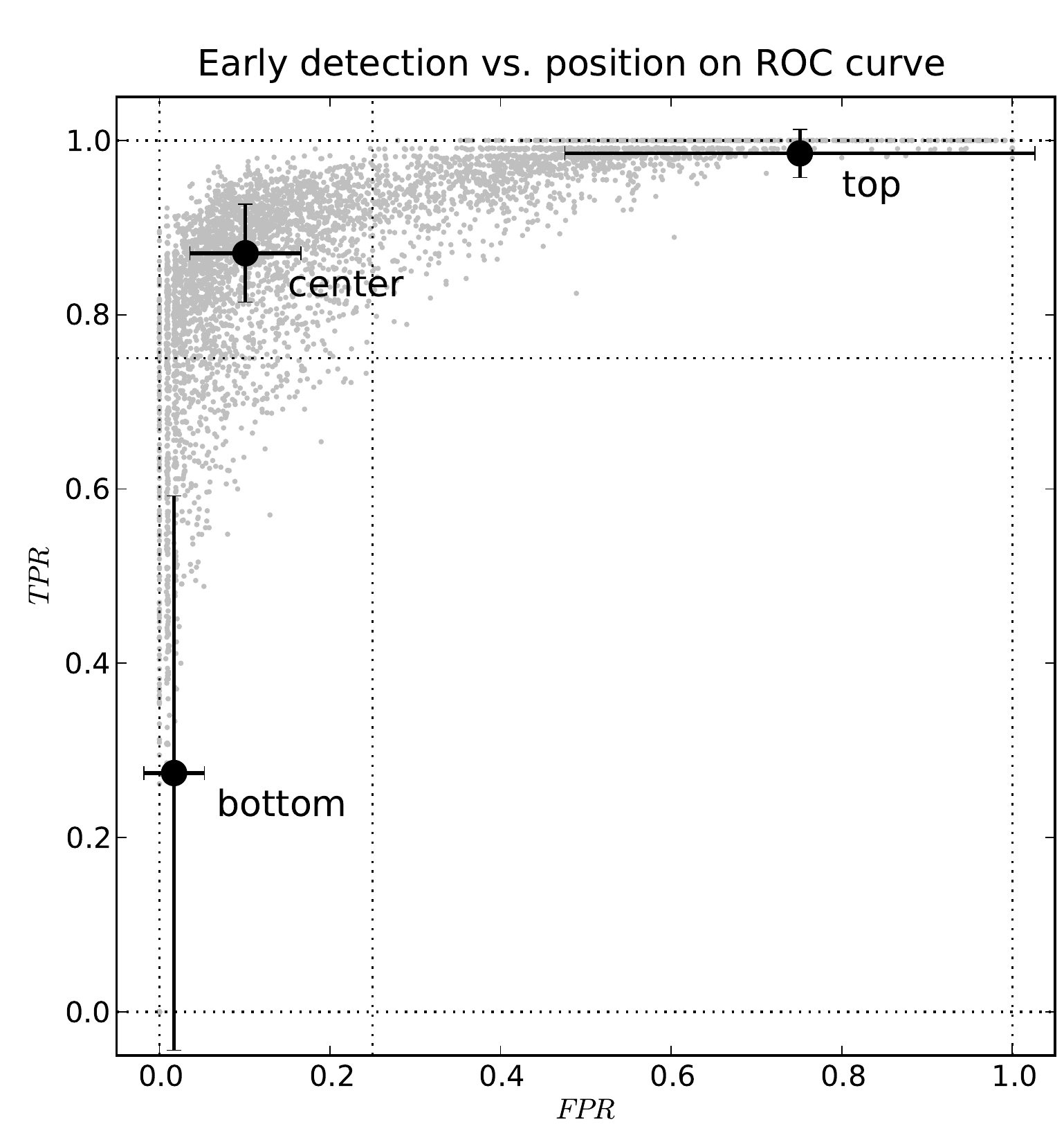}
\includegraphics[width=2.73in, clip=true, trim=0in .2in 0.5in 6.9in]{roc_early_late-eps-converted-to}
\label{fig:roc-early-late}
}
\caption{Results on Twitter data.
  (a) Weighted majority voting achieves a low error rate
  (FPR of~4\%, TPR of 95\%) and detects trending topics in advance
  of Twitter 79\% of the time, with a mean of 1.43 hours when it does;
  parameters: $\gamma=10, T=115, T_{smooth}=80, h=7$.
  (b) Envelope of all ROC curves shows the tradeoff between TPR and FPR.
  (c) Distribution of 
  detection times for ``aggressive'' (top), ``conservative'' (bottom) and
  ``in-between'' (center) parameter settings.
  \label{fig:twitter-main}}
\end{figure}


\textbf{Forecasting trending topics on twitter.}
We provide only an overview of our Twitter results here, deferring full
details to 
Appendix \ref{sec:forecasting-trends}.
We sampled 500 examples of trends at
random from a list of June 2012 news trends, and 500 examples of non-trends
based on phrases appearing in user posts during
the same month. As we do not know how Twitter chooses what phrases are
considered as candidate phrases for trending topics, it's unclear what the
size of the non-trend category is in comparison to the size of the trend
category. Thus, for simplicity, we intentionally control for the class sizes
by setting them equal. In practice, one could still expressly assemble the
training data to have pre-specified class sizes and then tune~$\theta$ for
generalized weighted majority voting \eqref{eq:decision-rule-general}. In our
experiments, we use the usual weighted majority voting
\eqref{eq:decision-rule-with-min} (i.e., $\theta=1$) to classify time series,
where $\Delta_{\max}$ is set to the maximum possible (we consider all shifts).

Per topic, we created its time series based on a pre-processed version of the
raw rate of how often the topic was shared, i.e., its {\em Tweet rate}. We
empirically found that how news topics become trends tends to follow a finite
number of patterns; a few examples of these patterns are shown in Figure
\ref{fig:clusters}. We randomly divided the set of trends and non-trends into
into two halves, one to use as training data and one to use as test data. We
applied weighted majority voting, sweeping over $\gamma$, $T$, and data
pre-processing parameters. As shown in
Figure~\ref{fig:twitter-main}\subref{fig:early}, one choice of parameters
allows us to detect trending topics in advance of Twitter 79\% of the time,
and when we do, we detect them an average of 1.43 hours earlier. Furthermore,
we achieve a true positive rate (TPR) of 95\% and a false positive rate (FPR)
of 4\%. Naturally, there are tradeoffs between TPR, FPR, and how early we make
a prediction (i.e., how small $T$ is). As shown in
Figure~\ref{fig:twitter-main}\subref{fig:roc-early-late}, an ``aggressive''
parameter setting yields early detection and high TPR but high FPR, and a
``conservative'' parameter setting yields low FPR but late detection and low
TPR. An ``in-between'' setting can strike the right balance.

\textbf{Acknowledgements.}
This work was supported in part by the Army Research Office under MURI Award
58153-MA-MUR. GHC was supported by an NDSEG fellowship.

%


\newpage

\small
\bibliography{twitter}

\begin{thebibliography}{10}

\bibitem{Asur}
Sitaram Asur, Bernardo~A. Huberman, G{\'a}bor Szab{\'o}, and Chunyan Wang.
\newblock Trends in social media: Persistence and decay.
\newblock In {\em Proceedings of the Fifth International Conference on Weblogs
  and Social Media}, 2011.

\bibitem{bagnall_2012}
Anthony Bagnall, Luke Davis, Jon Hills, and Jason Lines.
\newblock Transformation based ensembles for time series classification.
\newblock In {\em Proceedings of the 12th {SIAM} International Conference on
  Data Mining}, pages 307--319, 2012.

\bibitem{gustavo_2011}
Gustavo~E.A.P.A. Batista, Xiaoyue Wang, and Eamonn~J. Keogh.
\newblock A complexity-invariant distance measure for time series.
\newblock In {\em Proceedings of the 11th {SIAM} International Conference on
  Data Mining}, pages 699--710, 2011.

\bibitem{Becker}
Hila Becker, Mor Naaman, and Luis Gravano.
\newblock Beyond trending topics: Real-world event identification on {T}witter.
\newblock In {\em Proceedings of the Fifth International Conference on Weblogs
  and Social Media}, 2011.

\bibitem{Cataldi}
Mario Cataldi, Luigi Di~Caro, and Claudio Schifanella.
\newblock Emerging topic detection on twitter based on temporal and social
  terms evaluation.
\newblock In {\em Proceedings of the 10th International Workshop on Multimedia
  Data Mining}, 2010.

\bibitem{cover_1967}
Thomas~M. Cover and Peter~E. Hart.
\newblock Nearest neighbor pattern classification.
\newblock {\em IEEE Transactions on Information Theory}, 13(1):21--27, 1967.

\bibitem{dasgupta_2007}
Sanjoy Dasgupta and Leonard Schulman.
\newblock A probabilistic analysis of {EM} for mixtures of separated, spherical
  gaussians.
\newblock {\em Journal of Machine Learning Research}, 8:203--226, 2007.

\bibitem{ding_2008}
Hui Ding, Goce Trajcevski, Peter Scheuermann, Xiaoyue Wang, and Eamonn Keogh.
\newblock Querying and mining of time series data: experimental comparison of
  representations and distance measures.
\newblock {\em Proceedings of the VLDB Endowment}, 1(2):1542--1552, 2008.

\bibitem{feldman_2011}
Dan Feldman, Matthew Faulkner, and Andreas Krause.
\newblock Scalable training of mixture models via coresets.
\newblock In {\em Advances in Neural Information Processing Systems 24}, 2011.

\bibitem{fukunaga_1990}
Keinosuke Fukunaga.
\newblock {\em Introduction to statistical pattern recognition (2nd ed.)}.
\newblock Academic Press Professional, Inc., 1990.

\bibitem{hsu_2013}
Daniel Hsu and Sham~M. Kakade.
\newblock Learning mixtures of spherical gaussians: Moment methods and spectral
  decompositions, 2013.
\newblock arXiv:1206.5766.

\bibitem{kasiviswanathan_2011}
Shiva~Prasad Kasiviswanathan, Prem Melville, Arindam Banerjee, and Vikas
  Sindhwani.
\newblock Emerging topic detection using dictionary learning.
\newblock In {\em Proceedings of the 20th ACM Conference on Information and
  Knowledge Management}, pages 745--754, 2011.

\bibitem{kasiviswanathan_2012}
Shiva~Prasad Kasiviswanathan, Huahua Wang, Arindam Banerjee, and Prem Melville.
\newblock Online l1-dictionary learning with application to novel document
  detection.
\newblock In {\em Advances in Neural Information Processing Systems 25}, pages
  2267--2275, 2012.

\bibitem{chi_square_tails}
Beatrice Laurent and Pascal Massart.
\newblock Adaptive estimation of a quadratic functional by model selection.
\newblock {\em Annals of Statistics}, 28(5):1302--1338, 2000.

\bibitem{Mathioudakis}
Michael Mathioudakis and Nick Koudas.
\newblock Twittermonitor: trend detection over the {T}witter stream.
\newblock In {\em Proceedings of the 2010 ACM SIGMOD International Conference
  on Management of Data}, 2010.

\bibitem{moitra_2010}
Ankur Moitra and Gregory Valiant.
\newblock Settling the polynomial learnability of mixtures of gaussians.
\newblock In {\em 51st Annual IEEE Symposium on Foundations of Computer
  Science}, pages 93--102, 2010.

\bibitem{nanopoulos_2001}
Alex Nanopoulos, Rob Alcock, and Yannis Manolopoulos.
\newblock Feature-based classification of time-series data.
\newblock {\em International Journal of Computer Research}, 10, 2001.

\bibitem{rodriguez_2004}
Juan~J. Rodr\'{\i}guez and Carlos~J. Alonso.
\newblock Interval and dynamic time warping-based decision trees.
\newblock In {\em Proceedings of the 2004 ACM Symposium on Applied Computing},
  2004.

\bibitem{vempala_wang}
Santosh Vempala and Grant Wang.
\newblock A spectral algorithm for learning mixture models.
\newblock {\em Journal of Computer and System Sciences}, 68(4):841--860, 2004.

\bibitem{weinberger_2009}
Kilian~Q. Weinberger and Lawrence~K. Saul.
\newblock Distance metric learning for large margin nearest neighbor
  classification.
\newblock {\em Journal of Machine Learning Research}, 10:207--244, 2009.

\bibitem{wu_2004}
Yi~Wu and Edward~Y. Chang.
\newblock Distance-function design and fusion for sequence data.
\newblock In {\em Proceedings of the 2004 ACM International Conference on
  Information and Knowledge Management}, 2004.

\bibitem{xi_2006}
Xiaopeng Xi, Eamonn~J. Keogh, Christian~R. Shelton, Li~Wei, and Chotirat~Ann
  Ratanamahatana.
\newblock Fast time series classification using numerosity reduction.
\newblock In {\em Proceedings of the 23rd International Conference on Machine
  Learning}, 2006.

\end{thebibliography}

\normalsize

\newpage
\appendix

\section{Proof of Theorem \ref{thm:wmv-main-result}}
\label{sec:proof}

Let $S$ be the time series with an unknown label that we wish to classify
using training data.
Denote $m_+ \triangleq |\cal V_+|$,
$m_- \triangleq |{\cal V}_- |= m - m_+$,
$n_+ \triangleq |{\cal R}_+|$,
$n_- \triangleq |{\cal R}_-|$, and
$\mathcal{R} \triangleq \mathcal{R}_+ \cup \mathcal{R}_-$.
Recall that
${\cal D}_+\triangleq\{0,1,\dots,\Delta_{\max}\}$, and
${\cal D}\triangleq\{-\Delta_{\max},\dots,-1,0,1,\dots,\Delta_{\max}\}$.

As per the model, there exists a latent source $V$, shift
$\Delta' \in {\cal D}_+$, and noise signal $E'$ such that
\begin{equation}
S = V*\Delta' + E'.
\end{equation}
Applying a standard coupon collector's problem result, with a training set of
size $n> \beta m \log m$, then with probability at least $1-m^{-\beta+1}$, for
each latent source $V \in {\cal V}$, there exists at least one time series $R$
in the set ${\cal R}$ of all training data that is generated from $V$.
Henceforth, we assume that this event holds. In Appendix
\ref{sec:non-uniform-latent-sources}, we elaborate on what happens if the
latent sources are not uniformly sampled.

Note that $R$ is generated from $V$ as
\begin{align}
R & = V * \Delta'' + E'',
\end{align}
where $\Delta''\in {\cal D}_+$ and $E''$ is a noise signal independent
of~$E'$. Therefore, we can rewrite $S$ in terms of $R$ as follows:
\begin{align}\label{eq:prf1}
S & = R * \Delta + E,
\end{align}
where $\Delta = \Delta' - \Delta{''} \in {\cal D}$ (note the change from
${\cal D}_+$ to ${\cal D}$) and
$E = E' - E{''}*\Delta$. Since $E'$ and $E{''}$ are i.i.d.~over time and
sub-Gaussian with parameter $\sigma$, one can easily verify that $E$ is
i.i.d.~over time  and sub-Gaussian with parameter $\sqrt{2}\sigma$. 

We now bound the probability of error of classifier
$\widehat{L}_{\theta}^{(T)}(\cdot;\gamma)$.
The probability of error or misclassification using the first $T$ time steps
of $S$ is given by
\begin{align}
 &\mathbb{P}\big(\text{misclassify~}S\text{~using its first~}T
                 \text{~time steps}\big) \nonumber \\
 &=\mathbb{P}(\widehat{L}_\theta^{(T)}(S;\gamma)=-1|L=+1)
   \underbrace{ \mathbb{P}(L=+1) }_{m_+/m}
   \,+\,
   \mathbb{P}(\widehat{L}_\theta^{(T)}(S;\gamma)=+1|L=-1)
   \underbrace{ \mathbb{P}(L=-1) }_{m_-/m}.
\label{eq:pf-prob-error}
\end{align}
In the remainder of the proof, we primarily show how to bound 
$\mathbb{P}(\widehat{L}_\theta^{(T)}(S;\gamma)=-1|L=+1)$. The bound for
$\mathbb{P}(\widehat{L}_\theta^{(T)}(S;\gamma)=+1|L=-1)$ is almost
identical. By Markov's inequality,
\begin{equation}
\mathbb{P}(\widehat{L}_\theta^{(T)}(S;\gamma)=-1|L=+1)
=\mathbb{P}\bigg(
             \frac{1}{\Lambda^{(T)}(S;\gamma)}
               \geq \frac{1}{\theta} \Big| L = +1
           \bigg)
\le
 \theta
 \mathbb{E}\bigg[\frac{1}{\Lambda^{(T)}(S;\gamma)} \Big| L=+1\bigg].
\label{eq:pf-markov-ineq}
\end{equation}
Now, 
\begin{equation}
\mathbb{E}\bigg[\frac{1}{\Lambda^{(T)}(S;\gamma)} \Big| L=+1\bigg]
\le
\max_{r_+\in\mathcal{R}_+, \Delta_+ \in {\cal D}}
  \mathbb{E}_E\bigg[
                \frac{1}{\Lambda^{(T)}(r_+ * \Delta_+ + E; \gamma)}
              \bigg].
\label{eq:pf-helper-bound1}
\end{equation}
With the above inequality in mind, we next bound
$1/\Lambda^{(T)}(\widetilde{r}_+ * \widetilde{\Delta}_+ + E;\gamma)$
for any choice of $\widetilde{r}_+ \in {\cal R}_+$ and
$\widetilde{\Delta}_+ \in {\cal D}$. Note that for any time series $s$,
\begin{equation}
\frac{1}{\Lambda^{(T)}(s;\gamma)}
\le
\frac{\sum_{\substack{r_-\in {\cal R}_-, \\ \Delta_- \in {\cal D}}}
        \exp\big(-\gamma  \|r_-*\Delta_- - s\|_T^2\big)}
     {\exp\big(
            -\gamma \|\widetilde{r}_+ * \widetilde{\Delta}_+ - s\|_T^2
          \big)}.
\end{equation}
After evaluating the above for
$s = \widetilde{r}_+ * \widetilde{\Delta}_+ + E$, a bit of algebra shows that
\begin{align}
&\frac{1}{\Lambda^{(T)}(\widetilde{r}_+ * \widetilde{\Delta}_+ + E;\gamma)}
  \nonumber \\
&\le
  \sum_{\substack{r_-\in {\cal R}_-, \\ \Delta_- \in {\cal D}}}
  \big\{
   \exp\big(-\gamma
             \|\widetilde{r}_+ *\widetilde{\Delta}_+
               - r_{-}*\Delta_{-}\|_T^2\big)
    \exp\big(-2\gamma
              \langle \widetilde{r}_+*\widetilde{\Delta}_+ 
                - r_{-} * \Delta_{-}, E\rangle_T\big)
  \big\},
\label{eq:pf-helper-bound2}
\end{align}
where $\langle q,q'\rangle_T \triangleq \sum_{t=1}^T q(t) q'(t)$ for time
series $q$ and $q'$.

Taking the expectation of \eqref{eq:pf-helper-bound2} with respect to noise
signal $E$, we obtain the following bound: 
\begin{align}
&\mathbb{E}_{E}
 \bigg[
   \frac{1}{\Lambda^{(T)}(\widetilde{r}_+ * \widetilde{\Delta}_+ + E;\gamma)}
 \bigg] \nonumber \\
&\le
  \mathbb{E}_{E}
  \bigg[
    \sum_{\substack{r_-\in {\cal R}_-, \\ \Delta_- \in {\cal D}}}\!\!
   \Big\{
      \exp\big(-\gamma
                \|\widetilde{r}_+ * \widetilde{\Delta}_+
                  - r_{-} * \Delta_{-}\|^2_T\big)
     \exp\big(-2\gamma
               \langle \widetilde{r}_+ * \widetilde{\Delta}_+ - r_{-}*\Delta_{-}, E\rangle_T\big) 
     \Big\}
  \bigg]
  \nonumber \\
&\overset{(i)}{=}\!
    \sum_{\substack{r_{-}\in\mathcal{R}_-, \\
                    \Delta_{-}\in{\cal D}}
         }\!\!
      \exp\big(-\gamma \|\widetilde{r}_+ * \widetilde{\Delta}_+
                         - r_{-} * \Delta_{-}\|^2_T\big)
      \prod_{t=1}^T
        \mathbb{E}_{E(t)}
        [ \exp\big(-2\gamma(\widetilde{r}_+(t+ \widetilde{\Delta}_+)
                            - r_{-}(t+\Delta_{-})) E(t)\big) ]
  \nonumber \\
&\!\overset{(ii)}{\le}\!
 \sum_{\substack{r_{-}\in\mathcal{R}_-, \\
                    \Delta_{-}\in{\cal D}}
         }\!\!
     \exp\big(-\gamma \|\widetilde{r}_+ * \widetilde{\Delta}_+
                        - r_{-} * \Delta_{-}\|^2_T\big) 
  \prod_{t=1}^T
      \exp\big(4\sigma^2\gamma^2
           (\widetilde{r}_+(t + \widetilde{\Delta}_+)
            -
            r_{-}(t + \Delta_{-}))^2\big)
  \nonumber \\
& = \sum_{\substack{r_{-}\in\mathcal{R}_-, \\
                    \Delta_{-}\in{\cal D}}
         }\!\!
    \exp\big(-(\gamma-4\sigma^2 \gamma^2) \|r_+ * \Delta_+ - r_{-} * \Delta_{-}\|^2_T\big) 
  \nonumber \\
&\le (2\Delta_{\max}+1) n_- \exp\big(-(\gamma-4\sigma^2 \gamma^2) G^{(T)}\big),
\label{eq:pf-helper-bound3}
\end{align}
where step $(i)$ uses independence of entries of $E$, step $(ii)$ uses
the fact that $E(t)$ is zero-mean sub-Gaussian with parameter
$\sqrt{2}\sigma$, and the last line abbreviates the gap
$G^{(T)}\equiv G^{(T)}(\mathcal{R}_+,\mathcal{R}_-,\Delta_{\max})$.

Stringing together inequalities \eqref{eq:pf-markov-ineq},
\eqref{eq:pf-helper-bound1}, and \eqref{eq:pf-helper-bound3}, we obtain 
\begin{equation}
\mathbb{P}(\widehat{L}_\theta^{(T)}(S;\gamma)=-1|L=+1)
\le
 \theta
 (2\Delta_{\max}+1) n_- \exp\big(-(\gamma-4\sigma^2 \gamma^2) G^{(T)}\big).
\label{eq:pf-misclassify-bound++}
\end{equation}
Repeating a similar argument yields
\begin{equation}
\mathbb{P}(\widehat{L}_\theta^{(T)}(S;\gamma)=+1|L=-1)
\le
 \frac{1}{\theta}
 (2\Delta_{\max}+1) n_+ \exp\big(-(\gamma-4\sigma^2 \gamma^2) G^{(T)}\big).
\label{eq:pf-misclassify-bound--}
\end{equation}
Finally, plugging \eqref{eq:pf-misclassify-bound++} and
\eqref{eq:pf-misclassify-bound--} into \eqref{eq:pf-prob-error} gives
\begin{align}
\mathbb{P}(\widehat{L}_{\theta}^{(T)}(S;\gamma)\ne L)
&\le
  \theta (2\Delta_{\max}+1)
  \frac{n_- m_+}{m}
  \exp\big(-(\gamma-4\sigma^2 \gamma^2) G^{(T)}\big) \nonumber \\
&\quad
  +
  \frac{1}{\theta} (2\Delta_{\max}+1)
  \frac{n_+ m_-}{m}
  (2\Delta_{\max}+1) n_+ \exp\big(-(\gamma-4\sigma^2 \gamma^2) G^{(T)}\big)
 \nonumber \\
&=
  \Big( \frac{\theta m_+}{m} + \frac{m_-}{\theta m} \Big)
  (2\Delta_{\max}+1) n
  \exp\big(-(\gamma-4\sigma^2 \gamma^2) G^{(T)}\big). 
\end{align}
This completes the proof
of Theorem \ref{thm:wmv-main-result}.


\section{Proof of Theorem \ref{thm:nn-main-result}}
\label{sec:proof-nn}
 
The proof uses similar steps as the weighted majority voting case. As before,
we consider the case when our training data sees each latent source at least
once (this event happens with probability at least $1-m^{-\beta+1}$).

We decompose the probability of error into terms depending on which latent
source $V$ generated $S$:
\begin{equation}
\mathbb{P}(\widehat{L}_{NN}^{(T)}(S)\ne L)
=\sum_{v\in\mathcal{V}}
   \mathbb{P}(V=v)
   \mathbb{P}(\widehat{L}_{NN}^{(T)}(S)\ne L|V=v)
=\sum_{v\in\mathcal{V}}
   \frac{1}{m}
   \mathbb{P}(\widehat{L}_{NN}^{(T)}(S)\ne L|V=v).
\label{eq:nn-prob-error}
\end{equation}
Next, we bound each $\mathbb{P}(\widehat{L}_{NN}^{(T)}(S)\ne L|V=v)$ term.
Suppose that $v\in\mathcal{V}_+$, i.e., $v$ has label $L=+1$; the case when
$v\in\mathcal{V}_-$ is similar. Then we make an error and declare
$\widehat{L}_{NN}^{(T)}(S)=-1$ when the nearest neighbor $\widehat{r}$ to time
series $S$ is in the set $\mathcal{R}_-$, where
\begin{equation}
(\widehat{r},\widehat{\Delta})
=\arg\min_{(r,\Delta)\in\mathcal{R}\times\mathcal{D}}
   \|r*\Delta-S\|_T^2.
\label{eq:nn-opt}
\end{equation}
By our assumption that every latent source is seen in the training data, there
exists $r^{*}\in\mathcal{R}_{+}$ generated by latent source $v$, and so
\begin{equation}
S=r^* * \Delta^*+E
\end{equation}
for some shift $\Delta^*\in\mathcal{D}$ and noise signal $E$ consisting of
i.i.d.~entries that are zero-mean sub-Gaussian with parameter
$\sqrt{2}\sigma$.

By optimality of $(\widehat{r},\widehat{\Delta})$ for optimization problem
\eqref{eq:nn-opt}, we have
\begin{equation}
\|r*\Delta - (r^* * \Delta^*+E)\|_T^2
\ge
  \|\widehat{r} * \widehat{\Delta} - (r^* * \Delta^* + E)\|_T^2
\qquad
\text{for all }r\in\mathcal{R},\Delta\in\mathcal{D}.
\end{equation}
Plugging in $r=r^*$ and $\Delta=\Delta^*$, we obtain
\begin{align}
\|E\|_T^2
&
\ge\|\widehat{r} * \widehat{\Delta} - (r^* * \Delta^* + E)\|_T^2
 \nonumber \\
&
=\|(\widehat{r} * \widehat{\Delta} - r^* * \Delta^*) - E\|_T^2
 \nonumber \\
&
=\|\widehat{r} * \widehat{\Delta} - r^* * \Delta^*\|_T^2
 - 2\langle\widehat{r} * \widehat{\Delta} - r^* * \Delta^*, E \rangle_T
 + \|E\|_T^2,
\end{align}
or, equivalently,
\begin{equation}
2\langle
   \widehat{r} * \widehat{\Delta} - r^* * \Delta^*, E
 \rangle_T
\ge
 \|\widehat{r} * \widehat{\Delta} -r^* * \Delta^*\|_T^2.
\label{eq:nn-opt-condition}
\end{equation}
Thus, given $V=v\in\mathcal{V}_{+}$, declaring $\widehat{L}_{NN}^{(T)}(S)=-1$
implies the existence of $\widehat{r}\in\mathcal{R}_-$ and
$\widehat{\Delta}\in\mathcal{D}$ such that optimality condition
\eqref{eq:nn-opt-condition} holds. Therefore,
\begin{align}
&\mathbb{P}(\widehat{L}_{NN}^{(T)}(S)=-1|V=v) \nonumber \\
&\le
  \mathbb{P}
  \bigg(
    \bigcup_{\widehat{r}\in\mathcal{R}_-,
             \widehat{\Delta}\in\mathcal{D}}
    \{
      2\langle
         \widehat{r} * \widehat{\Delta} - r^* * \Delta^*, E
       \rangle_T
      \ge
       \|\widehat{r} * \widehat{\Delta} -r^* * \Delta^*\|_T^2
    \}
  \bigg)
  \nonumber \\
&\overset{(i)}{\le}\!
  (2\Delta_{\max}+1)
  n_-
  \mathbb{P}
  (
    2\langle
       \widehat{r} * \widehat{\Delta} - r^* * \Delta^*, E
     \rangle_T
    \ge
     \|\widehat{r} * \widehat{\Delta} -r^* * \Delta^*\|_T^2
  )
  \nonumber \\
&\le
  (2\Delta_{\max}+1)
  n_-
  \mathbb{P}
  (
    \exp(2\lambda
         \langle
           \widehat{r} * \widehat{\Delta} - r^* * \Delta^*, E
         \rangle_T
        )
    \ge
    \exp(\lambda
         \|\widehat{r} * \widehat{\Delta} -r^* * \Delta^*\|_T^2
        )
  )
  \nonumber \\
&\!\overset{(ii)}{\le}\!
  (2\Delta_{\max}+1)
  n_-
  \exp(-\lambda
        \|\widehat{r} * \widehat{\Delta} -r^* * \Delta^*\|_T^2
      )
  \mathbb{E}
  [
    \exp(2\lambda
         \langle
           \widehat{r} * \widehat{\Delta} - r^* * \Delta^*, E
         \rangle_T
        )
  ]
  \nonumber \\
&\!\!\overset{(iii)}{\le}\!\!
  (2\Delta_{\max}+1)
  n_-
  \exp(-\lambda
        \|\widehat{r} * \widehat{\Delta} -r^* * \Delta^*\|_T^2
      )
  \exp(4\lambda^2\sigma^2
       \|\widehat{r} * \widehat{\Delta} - r^* * \Delta^*\|_T^2
      )
  \nonumber \\
&=
  (2\Delta_{\max}+1)
  n_-
  \exp(-(\lambda-4\lambda^2\sigma^2)
        \|\widehat{r} * \widehat{\Delta} -r^* * \Delta^*\|_T^2
      )
  \nonumber \\
&\le
 (2\Delta_{\max}+1)
 n
 \exp(-(\lambda-4\lambda^2\sigma^2)G^{(T)})
 \nonumber \\
&\!\!\overset{(iv)}{\le}\!
 (2\Delta_{\max}+1) n
 \exp\Big( -\frac{1}{16\sigma^2}G^{(T)} \Big),
\label{eq:nn-main-bound-reproduced}
\end{align}
where step $(i)$ is by a union bound, step $(ii)$ is by Markov's inequality,
step $(iii)$ is by sub-Gaussianity, and step $(iv)$ is by choosing
$\lambda=\frac{1}{8\sigma^2}$.

As bound \eqref{eq:nn-main-bound-reproduced} also holds for
$\mathbb{P}(\widehat{L}_{NN}^{(T)}(S)=+1|V=v)$ when instead
$v\in\mathcal{V}_-$, we can now piece together \eqref{eq:nn-prob-error}
and \eqref{eq:nn-main-bound-reproduced} to yield the final result:
\begin{equation}
\mathbb{P}(\widehat{L}_{NN}^{(T)}(S)\ne L)
=\sum_{v\in\mathcal{V}}
   \frac{1}{m}
   \mathbb{P}(\widehat{L}_{NN}^{(T)}(S)\ne L|V=v)
\le
 (2\Delta_{\max}+1) n
 \exp\Big(-\frac{1}{16\sigma^{2}}G^{(T)}\Big).
\end{equation}

\section{Handling Non-uniformly Sampled Latent Sources}
\label{sec:non-uniform-latent-sources}

When each time series generated from the latent source model is sampled
uniformly at random, then having $n>m\log\frac{2m}{\delta}$ (i.e.,
$\beta=1+\log\frac{2}{\delta}/\log m$) ensures that with probability at least
$1-\frac{\delta}{2}$, our training data sees every latent source at least
once. When the latent sources aren't sampled uniformly at random, we show that
we can simply replace the condition $n>m\log\frac{2m}{\delta}$ with
$n\ge\frac{8}{\pi_{\min}}\log\frac{2m}{\delta}$ to achieve a similar (in fact,
stronger) guarantee, where $\pi_{\min}$ is the smallest probability
of a particular latent source occurring.

\begin{lemma}
Suppose that the $i$-th latent source occurs with probability $\pi_i$ in the
latent source model. Denote
$\pi_{\min}
 \triangleq
   \min_{i\in\{1,2,\dots,m\}} \pi_i$.
Let $\xi_i$ be the number of times that the $i$-th latent source appears in
the training data. If $n\ge\frac{8}{\pi_{\min}}\log\frac{2m}{\delta}$, then
with probability at least $1-\frac{\delta}{2}$, every latent source appears
strictly greater than $\frac{1}{2}n\pi_{\min}$ times in the training data.
\end{lemma}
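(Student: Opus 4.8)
The plan is to observe that $\xi_i$ is a Binomial random variable, control its lower tail with a multiplicative Chernoff bound, and then union-bound over the $m$ latent sources. Concretely, each of the $n$ training time series is generated independently, and each selects the $i$-th latent source with probability $\pi_i$. Hence $\xi_i = \sum_{j=1}^{n}\mathbf{1}\{\text{the $j$-th training series is generated by source $i$}\}$ is a sum of $n$ i.i.d.~$\mathrm{Bernoulli}(\pi_i)$ indicators, so $\xi_i\sim\mathrm{Binomial}(n,\pi_i)$ with mean $\mathbb{E}[\xi_i]=n\pi_i\ge n\pi_{\min}$.

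Next I would control, for each fixed $i$, the probability of the ``bad'' event $\{\xi_i\le\tfrac12 n\pi_{\min}\}$. The key step is to reconcile the fixed threshold $\tfrac12 n\pi_{\min}$ with the source-dependent mean $n\pi_i$: since $\pi_{\min}\le\pi_i$ we have $\tfrac12 n\pi_{\min}\le\tfrac12 n\pi_i=(1-\tfrac12)\mathbb{E}[\xi_i]$, so $\{\xi_i\le\tfrac12 n\pi_{\min}\}\subseteq\{\xi_i\le\tfrac12 n\pi_i\}$. Applying the standard multiplicative Chernoff lower-tail bound $\mathbb{P}(X\le(1-\varepsilon)\mu)\le\exp(-\varepsilon^2\mu/2)$ with $\varepsilon=\tfrac12$ and $\mu=n\pi_i$ gives $\mathbb{P}(\xi_i\le\tfrac12 n\pi_i)\le\exp(-n\pi_i/8)\le\exp(-n\pi_{\min}/8)$, where the final inequality again uses $\pi_i\ge\pi_{\min}$. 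This produces a tail bound that is uniform over all $i$.

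Finally, I would apply a union bound over the $m$ sources: $\mathbb{P}(\exists\,i:\xi_i\le\tfrac12 n\pi_{\min})\le m\exp(-n\pi_{\min}/8)$. Substituting the hypothesis $n\ge\frac{8}{\pi_{\min}}\log\frac{2m}{\delta}$ makes the exponent at most $-\log\frac{2m}{\delta}$, so the right-hand side is at most $m\cdot\frac{\delta}{2m}=\frac{\delta}{2}$. Taking complements then shows that every latent source appears strictly more than $\tfrac12 n\pi_{\min}$ times with probability at least $1-\tfrac{\delta}{2}$, which is exactly the claim.

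There is no deep obstacle here; this is a routine concentration argument of balls-into-bins type. The only point that requires a little care is precisely the mismatch noted above between the common threshold $\tfrac12 n\pi_{\min}$ and the per-source mean $n\pi_i$, which the containment $\{\xi_i\le\tfrac12 n\pi_{\min}\}\subseteq\{\xi_i\le\tfrac12 n\pi_i\}$ handles, allowing a single Chernoff bound with relative deviation $\varepsilon=\tfrac12$ to apply uniformly. One should also check that the strict inequality in the conclusion matches the weak inequality in the tail bound, which it does, since the complement of $\{\xi_i\le\tfrac12 n\pi_{\min}\}$ is exactly $\{\xi_i>\tfrac12 n\pi_{\min}\}$.
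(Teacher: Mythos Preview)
Your proposal is correct and follows essentially the same route as the paper: recognize $\xi_i\sim\mathrm{Binomial}(n,\pi_i)$, use the containment $\{\xi_i\le\tfrac12 n\pi_{\min}\}\subseteq\{\xi_i\le\tfrac12 n\pi_i\}$, apply a multiplicative Chernoff lower-tail bound to get $\exp(-n\pi_i/8)\le\exp(-n\pi_{\min}/8)$, and union-bound over the $m$ sources. The paper phrases the tail bound as $\exp\bigl(-\tfrac12\,(n\pi_i-\tfrac12 n\pi_i)^2/(n\pi_i)\bigr)$, which is the same inequality you invoke with $\varepsilon=\tfrac12$.
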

\begin{proof}
Note that $\xi_i\sim\text{Bin}(n,\pi_i)$. We have
\begin{align}
\mathbb{P}\big(\xi_i \le \frac{1}{2}n\pi_{\min}\big)
&\le
  \mathbb{P}\big(\xi_i\le\frac{1}{2}n\pi_i\big)
  \nonumber \\
&\overset{(i)}{\le}\!
  \exp\Big(
        -\frac{1}{2}
         \cdot
         \frac{(n\pi_i-\frac{1}{2}n\pi_i)^2}
              {n\cdot\pi_i}
      \Big)
  \nonumber \\
&=\exp\Big( -\frac{n\pi_i}{8} \Big)
  \nonumber \\
&\le
  \exp\Big( -\frac{n\pi_{\min}}{8} \Big).
\end{align}
where step $(i)$ uses a standard binomial distribution lower tail bound.
Applying a union bound,
\begin{equation}
\mathbb{P}\bigg(
            \bigcup_{i\in\{1,2,\dots,m\}}
            \big\{
              \xi_i\le\frac{1}{2}n\pi_{\min}
            \big\}
          \bigg)
\le m\exp\Big(-\frac{n\pi_{\min}}{8}\Big),
\end{equation}
which is at most $\frac{\delta}{2}$ when
$n\ge\frac{8}{\pi_{\min}}\log\frac{2m}{\delta}$.
\end{proof}

\section{Sample Complexity for the Gaussian Setting Without Time Shifts}
\label{sec:gaussian-classification}

Existing results on learning mixtures of Gaussians by Dasgupta and Schulman
\cite{dasgupta_2007} and by Vempala and Wang \cite{vempala_wang} use a
different notion of gap than we do. In our notation, their gap can be written
as
\begin{equation}
G^{(T)*}
\triangleq
  \min_{v, v'\in\mathcal{V}\text{ s.t.\,}v\ne v'}\|v-v'\|_T^2,
\end{equation}
which measures the minimum separation between the true latent sources,
disregarding their labels.

We now translate our main theoretical guarantees to be in terms of gap
$G^{(T)*}$ under the assumption that the noise is Gaussian and that there are
no time shifts.

\begin{theorem}
Under the latent source model, suppose that the noise is zero-mean Gaussian
with variance $\sigma^{2}$, that there are no time shifts (i.e.,
$\Delta_{\max}=0$), and that we have sampled $n>m\log\frac{4m}{\delta}$
training time series. Then if
\begin{align}
G^{(T)*} & \ge 4\sigma^2 \log \frac{4n^2}{\delta},\\
       T & \ge (12+8\sqrt{2}) \log \frac{4n^2}{\delta},
\end{align}
then weighted majority voting (with $\theta=1, \gamma=\frac{1}{8\sigma^{2}}$)
and nearest-neighbor classification each classify a new time series
correctly with probability at least $1-\delta$.
\end{theorem}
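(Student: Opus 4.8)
The plan is to specialize Theorems~\ref{thm:wmv-main-result} and~\ref{thm:nn-main-result} to this setting and then replace the data-dependent gap $G^{(T)}(\mathcal{R}_+,\mathcal{R}_-,\Delta_{\max})$ by the source gap $G^{(T)*}$. With $\Delta_{\max}=0$, $\theta=1$, and $\gamma=\frac{1}{8\sigma^2}$ we have $\gamma-4\sigma^2\gamma^2=\frac{1}{16\sigma^2}$ and $\frac{\theta m_+}{m}+\frac{m_-}{\theta m}=1$, so both theorems collapse to the \emph{identical} bound $n\exp(-\frac{1}{16\sigma^2}G^{(T)})+m^{-\beta+1}$; hence a single argument covers both classifiers. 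Taking $n>m\log\frac{4m}{\delta}$ corresponds to $\beta=1+\log\frac{4}{\delta}/\log m$, which makes the coupon-collector term $m^{-\beta+1}\le\frac{\delta}{4}$. It therefore suffices to prove that, over the random training draw, the observed gap satisfies $G^{(T)}\ge 16\sigma^2\log\frac{4n^2}{\delta}$ with probability at least $1-\frac{\delta}{4}$: on that event the surviving term equals $n\exp(-\log\frac{4n^2}{\delta})=\frac{\delta}{4n}$, and a union over the coupon-collector failure, the gap failure, and this residual classification error is at most $\delta$.

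The heart of the proof is the \textbf{gap event}. Fix one opposite-label pair of training series. Since $\Delta_{\max}=0$ they are $r_+=v+E$ and $r_-=v'+E'$ with $v\in\mathcal{V}_+$, $v'\in\mathcal{V}_-$ (so $v\ne v'$ and $\|v-v'\|_T^2\ge G^{(T)*}$) and $E,E'$ independent i.i.d.\ $\mathcal{N}(0,\sigma^2)$ noise. Writing $w=v-v'$ and $N=E-E'$ (whose entries are i.i.d.\ $\mathcal{N}(0,2\sigma^2)$), I would expand
\begin{equation}
\|r_+-r_-\|_T^2=\|w+N\|_T^2=\|w\|_T^2+2\langle w,N\rangle_T+\|N\|_T^2,
\end{equation}
and lower-bound the three pieces separately: the deterministic term by $\|w\|_T^2\ge G^{(T)*}$; the Gaussian cross term $2\langle w,N\rangle_T\sim\mathcal{N}(0,8\sigma^2\|w\|_T^2)$ by its lower tail; and the chi-squared energy $\|N\|_T^2=2\sigma^2\chi^2_T$ by the Laurent--Massart lower tail $\|N\|_T^2\ge 2\sigma^2(T-2\sqrt{Tx})$. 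The key point is that $\|N\|_T^2$ contributes a positive boost of order $2\sigma^2 T$, which is exactly why a \emph{small} source gap $G^{(T)*}$ can still be amplified into a large observed gap once $T$ is large enough; this is the role of the hypothesis on $T$.

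Setting $x=\log\frac{4n^2}{\delta}$ and each tail-failure budget to $e^{-x}$, the cross term obeys $2\langle w,N\rangle_T\ge -4\sigma\|w\|_T\sqrt{x}$; I would absorb its $\|w\|_T$-dependence by Young's inequality, $4\sigma\|w\|_T\sqrt{x}\le\frac12\|w\|_T^2+8\sigma^2x$, so that the $\frac12\|w\|_T^2$ cancels half of the deterministic term. Using $\|w\|_T^2\ge G^{(T)*}\ge 4\sigma^2 x$ together with $\sqrt{12+8\sqrt2}=2+2\sqrt2$, the condition $T\ge(12+8\sqrt2)x$ then yields $\|r_+-r_-\|_T^2\ge(10+8\sqrt2)\sigma^2 x\ge 16\sigma^2 x$ on an event of probability at least $1-2e^{-x}$. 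A union bound over the at most $n_+n_-\le n^2/4$ opposite-label pairs (the tails need not be independent across pairs) shows that $G^{(T)}\ge 16\sigma^2\log\frac{4n^2}{\delta}$ fails with probability at most $\frac{n^2}{4}\cdot 2e^{-x}=\frac{\delta}{8}\le\frac{\delta}{4}$, which closes the argument.

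The main obstacle is the middle, $\|w\|_T$-dependent fluctuation $2\langle w,N\rangle_T$: its standard deviation scales with $\sqrt{G^{(T)*}}$ rather than being an absolute constant, so it cannot be controlled by a fixed additive slack. The Young's-inequality split is what tames it, trading half of the signal energy $\|w\|_T^2$ for an $O(\sigma^2 x)$ loss, after which the chi-squared boost $2\sigma^2 T$ (guaranteed large by the $T$ hypothesis) dominates all the $O(\sigma^2 x)$ correction terms. Care is also needed in bookkeeping the three $\delta$ allocations so that they sum to at most $\delta$, and in noting that the shift-free assumption $\Delta_{\max}=0$ is what makes $N=E-E'$ a clean i.i.d.\ Gaussian vector with no minimization over shifts to complicate the tail bounds.
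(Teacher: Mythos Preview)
Your proposal is correct and follows essentially the same route as the paper: decompose $\|r_+-r_-\|_T^2$ into signal, cross, and chi-squared pieces, control the cross term by a Gaussian tail and the energy term by Laurent--Massart, union-bound over the at most $n^2/4$ opposite-label pairs, and then feed the resulting lower bound on $G^{(T)}$ into Theorems~\ref{thm:wmv-main-result} and~\ref{thm:nn-main-result}. The only cosmetic difference is that the paper completes the square on the cross term (yielding $(\|w\|_T-2\sigma\sqrt{x})^2-4\sigma^2 x$) whereas you use Young's inequality $4\sigma\|w\|_T\sqrt{x}\le\tfrac12\|w\|_T^2+8\sigma^2 x$, which loses an extra $2\sigma^2 x$ but still clears the $16\sigma^2 x$ threshold under the stated hypothesis on~$T$; the paper also budgets four $\delta/4$ events (separating the two tail bounds) rather than your three, but either accounting works.
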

In particular, with access to a pool of $\Omega(m\log\frac{m}{\delta})$
time series, we can subsample $n=\Theta(m\log\frac{m}{\delta})$ of
them to use as training data. Then provided that $G^{(T)*}=\Omega(\sigma^{2}\log\frac{m}{\delta})$
and $T=\Omega(\log\frac{m}{\delta})$, we correctly classify a new
time series with probability at least $1-\delta$.

\begin{proof}
The basic idea is to show that with high probability, our gap
$G^{(T)} \equiv G^{(T)}(\mathcal{R}_+, \mathcal{R}_-, \Delta_{\max})$
satisfies
\begin{equation}
G^{(T)}
\ge G^{(T)*} + 2\sigma^2 T
    - 4\sigma\sqrt{G^{(T)*} \log\frac{4n^2}{\delta}}
    - 4\sigma^2 \sqrt{T\log\frac{4n^2}{\delta}}.
\end{equation}
The worst-case scenario occurs when
$G^{(T)*} = 4\sigma^2 \log \frac{4n^2}{\delta}$, at which point we have
\begin{equation}
G^{(T)}
\ge 2\sigma^2 T
    - 4\sigma^2 \log\frac{4n^2}{\delta}
    - 4\sigma^2 \sqrt{T\log\frac{4n^2}{\delta}}.
\end{equation}
The right-hand side is at least $\sigma^{2}T$ when
\begin{equation}
T \ge (12+8\sqrt{2}) \log\frac{4n^2}{\delta},
\end{equation}
which ensures that, with high probability, $G^{(T)} \ge \sigma^2 T$.
Theorems \ref{thm:wmv-main-result} and \ref{thm:nn-main-result} each say that
if we further have $n > m\log\frac{4m}{\delta}$, and
$T \ge 16\log\frac{4n}{\delta}$, then we classify a new time series
correctly with high probability, where we note that 
$T \ge (12+8\sqrt{2})\log\frac{4n^2}{\delta}
   \ge 16\log\frac{4n}{\delta}$.

We now fill in the details. Let $r_+$ and $r_-$ be two time series in the
training data that have labels $+1$ and $-1$ respectively, where we assume
that $\Delta_{\max}=0$. Let $v^{(r_+)}\in\mathcal{V}$ and
$v^{(r_-)}\in\mathcal{V}$ be the true latent sources of $r_+$ and $r_-$,
respectively. This means that
$r_+ \sim \mathcal{N}(v^{(r_+)}, \sigma^2 I_{T\times T})$ and
$r_- \sim \mathcal{N}(v^{(r_-)}, \sigma^2 I_{T\times T})$. Denoting
$E^{(r_+)} \sim \mathcal{N}(0, \sigma^2 I_{T\times T})$ and
$E^{(r_-)} \sim \mathcal{N}(0, \sigma^2 I_{T\times T})$ to be noise associated
with time series $r_+$ and $r_-$, we have
\begin{align}
\|r_+ - r_-\|_T^2
&= \| (v^{(r_+)} + E^{(r_+)}) - (v^{(r_-)} + E^{(r_-)}) \|_T^2 \nonumber \\
&= \| v^{(r_+)} - v^{(r_-)} \|_T^2
   +
   2\langle
      v^{(r_+)} - v^{(r_-)},
      E^{(r_+)} - E^{(r_-)}
    \rangle
   +
   \| E^{(r_+)} - E^{(r_-)} \|_T^2.
\end{align}
We shall show that with high probability, for all $r_+ \in \mathcal{R}_+$ and
for all $r_- \in \mathcal{R}_-$:
\begin{align}
\langle
  v^{(r_+)} - v^{(r_-)},
  E^{(r_+)} - E^{(r_-)}
\rangle
& \ge -2 \sigma\| v^{(r_+)} - v^{(r_-)} \|_T
       \sqrt{\log\frac{4 n^2}{\delta}},
  \label{eq:gap-true-sources-thing1} \\
\| E^{(r_+)} - E^{(r_-)} \|_T^2
& \ge 2\sigma^2 T - 4\sigma^2 \sqrt{T\log\frac{4 n^2}{\delta}}.
  \label{eq:gap-true-sources-thing2}
\end{align}

\begin{itemize}
\item Bound \eqref{eq:gap-true-sources-thing1}:
  $\langle v^{(r_+)} - v^{(r_-)}, E^{(r_+)} - E^{(r_-)} \rangle$ is zero-mean
  sub-Gaussian with parameter $\sqrt{2}\sigma\| v^{(r_+)} - v^{(r_-)} \|_T$,
  so
  \begin{equation}
    \mathbb{P}( \langle
                  v^{(r_+)} - v^{(r_-)},
                  E^{(r_+)} - E^{(r_-)}
                \rangle
                  \le -a)
    \le
      \exp\Big(
            -\frac{a^2}{4 \sigma^2 \| v^{(r_+)} - v^{(r_-)} \|_T^2}
          \Big)
    = \frac{\delta}{4n^2}
  \end{equation}
  with choice
  $a=2 \sigma \| v^{(r_+)} - v^{(r_-)} \|_T
     \sqrt{\log \frac{4n^2}{\delta} }$.
  A union bound over all pairs of time series in the training data with
  opposite labels gives
  \begin{equation}
  \mathbb{P}\Bigg(
              \bigcup_{\substack{r_+ \in \mathcal{R}_+, \\
                                 r_- \in \mathcal{R}_-}}
                \bigg\{
                  \langle
                    v^{(r_+)} - v^{(r_-)},
                    E^{(r_+)} - E^{(r_-)}
                  \rangle
                  \le
                    -2 \sigma\| v^{(r_+)} - v^{(r_-)} \|_T
                     \sqrt{\log \frac{4n^2}{\delta} }
                \bigg\}
            \Bigg)
            \le
              \frac{\delta}{4}.
  \end{equation}

\item Bound \eqref{eq:gap-true-sources-thing2}:
  Due to a result by Laurent and Massart \cite[Lemma 1]{chi_square_tails},
  we have
  \begin{equation}
  \mathbb{P}( \| E^{(r_+)} - E^{(r_-)} \|_T^2
              \le 2\sigma^2 T
                  - 4\sigma^2 \sqrt{Ta})
    \le e^{-a}
    = \frac{\delta}{4n^2}
  \end{equation}
  with choice $a=\log\frac{4n^2}{\delta}$. A union bound gives
  \begin{equation}
  \mathbb{P}\Bigg(
              \bigcup_{\substack{r_+\in\mathcal{R}_+, \\
                                 r_-\in\mathcal{R}_-}}
                \bigg\{
                  \| E^{(r_+)} - E^{(r_-)} \|_T^2
                  \le
                    2\sigma^2 T
                    - 4\sigma^2\sqrt{T\log\frac{4n^2}{\delta}}
                \bigg\}
            \Bigg)
    \le \frac{\delta}{4}.
  \end{equation}

\end{itemize}

Assuming that bounds \eqref{eq:gap-true-sources-thing1} and
\eqref{eq:gap-true-sources-thing2} both hold, then for all
$r_+\in\mathcal{R}_{+},r_-\in\mathcal{R}_{-}$, we have
\begin{align}
&\| r_+ - r_- \|_T^2 \nonumber \\
&=\| v^{(r_+)} - v^{(r_-)} \|_T^2
  + 2\langle
       v^{(r_+)} - v^{(r_-)},
       E^{(r_+)} - E^{(r_-)}
     \rangle
  + \| E^{(r_+)} - E^{(r_-)} \|_T^2 \nonumber \\
&\ge
  \| v^{(r_+)} - v^{(r_-)} \|_T^2
  - 4\sigma \| v^{(r_+)} - v^{(r_-)} \|_T
     \sqrt{\log \frac{4n^2}{\delta} }
  + 2\sigma^2 T
  - 4\sigma^2 \sqrt{T\log\frac{4n^2}{\delta}} \nonumber \\
&\overset{(i)}{=}
  \bigg(
    \| v^{(r_+)} - v^{(r_-)} \|_T
    - 2\sigma\sqrt{\log \frac{4n^2}{\delta} }
  \bigg)^2
  - 4\sigma^2
     \log\frac{4n^2}{\delta}
  + 2\sigma^2 T
  - 4\sigma^2 \sqrt{T\log\frac{4n^2}{\delta}} \nonumber \\
&\overset{(ii)}{\ge}
  \bigg(
    \sqrt{G^{(T)*}}
    - 2\sigma\sqrt{\log \frac{4n^2}{\delta} }
  \bigg)^2
  + 2\sigma^2 T
  - 4\sigma^2 \log\frac{4n^2}{\delta}
  - 4\sigma^2 \sqrt{T\log\frac{4n^2}{\delta}},
\end{align}
where step $(i)$ follows from completing the square, and step $(ii)$ uses our
assumption that $G^{(T)*} \ge 4\sigma^2 \log\frac{4n^2}{\delta}$.
Minimizing over $r_+ \in \mathcal{R}_+$ and $r_- \in \mathcal{R}_-$, we get
\begin{equation}
G^{(T)}
\ge
  \bigg(
    \sqrt{G^{(T)*}}
    - 2\sigma\sqrt{\log\frac{4n^2}{\delta}}
  \bigg)^2
  + 2\sigma^2 T
  - 4\sigma^2 \log\frac{4n^2}{\delta}
  - 4\sigma^2 \sqrt{T\log\frac{4n^2}{\delta}}.
\end{equation}
The worst-case scenario occurs when
$G^{(T)*} = 4\sigma^2 \log\frac{4n^2}{\delta}$, in which case
\begin{equation}
G^{(T)}
\ge 2\sigma^2 T
    - 4\sigma^2 \log\frac{4n^2}{\delta}
    - 4\sigma^2 \sqrt{T\log\frac{4n^2}{\delta}}.
\end{equation}
Theorems \ref{thm:wmv-main-result} and \ref{thm:nn-main-result} imply that
having $G^{(T)} \ge \sigma^2 T$, $n > m\log\frac{4m}{\delta}$, and
$T \ge 16\log\frac{4n}{\delta}$ allows weighted majority voting
(with $\theta=1, \gamma=\frac{1}{8\sigma^2}$) and nearest-neighbor
classification to each succeed with high probability. We achieve
$G^{(T)} \ge \sigma^2 T$ by asking that
\begin{equation}
2\sigma^2 T
- 4\sigma^2 \log\frac{4n^2}{\delta}
- 4\sigma^2 \sqrt{T\log\frac{4n^2}{\delta}}
\ge \sigma^2 T,
\end{equation}
which happens when
\begin{equation}
T \ge (12 + 8\sqrt{2})\log\frac{4n^2}{\delta}.
\end{equation}
A union bound over the following four bad events (each controlled to happen
with probability at most $\frac{\delta}{4}$) yields the final result:
\begin{itemize}
\item Not every latent source is seen in the training data.
\item Bound \eqref{eq:gap-true-sources-thing1} doesn't hold.
\item Bound \eqref{eq:gap-true-sources-thing2} doesn't hold.
\item Assuming that the above three bad events don't happen, we still misclassify.
      $\hfill\qedhere$
\end{itemize}
\end{proof}

\section{Forecasting Trending Topics on Twitter}
\label{sec:forecasting-trends}

Twitter is a social network whose users post
messages called {\em Tweets}, which are then
broadcast to a user's followers. Often, emerging topics of interest are
discussed on Twitter in real time. Inevitably, certain topics gain sudden
popularity and --- in Twitter speak --- begin to {\em trend}.
Twitter surfaces
such topics as a list of top ten {\em trending topics}, or {\em trends}.

\textbf{Data.} We sampled 500 examples of trends at random from a list of June
2012 news trends and recorded the earliest time each topic trended within the
month. Before sampling, we filtered out trends that never achieved a rank of 3
or better on the Twitter trends list%
\footnote{On Twitter, trending topics compete for the top ten spots whereas we
are only detecting whether a topic will trend or not.}
as well as trends that lasted for less than 30
minutes as to keep our trend examples reasonably salient. We also sampled 500
examples of non-trends at random from a list of $n$-grams (of sizes 1, 2, and
3) appearing in Tweets created in June 2012, where we filter out any $n$-gram
containing words that appeared in one of our 500 chosen trend examples. Note
that as we do not know how Twitter chooses what phrases are considered as
topic phrases (and are candidates for trending topics), it's unclear what the
size of the non-trend category is in comparison to the size of the
trend category. Thus, for simplicity, we intentionally control for
the class sizes by setting them equal. In practice, one could still
expressly assemble the training data to have pre-specified class sizes and
then tune~$\theta$ for generalized weighted majority voting
\eqref{eq:decision-rule-general}. In our experiments, we just use the usual
weighted majority voting \eqref{eq:decision-rule-with-min} (i.e., $\theta=1$)
to classify time series.

From these examples of trends and non-trends, we then created time series of
activity for each topic based on the rate of Tweets about that topic over
time. To approximate this rate, we gathered 10\% of all Tweets from June 2012,
placed them into two-minute buckets according to their timestamps, and counted
the number of Tweets in each bucket. We denote the count at the $t$-th time
bucket as $\rho(t)$, which we refer to as the raw rate. We then transform the
raw rate in a number of ways, summarized in
Figure~\ref{fig:twitter-preprocessing}, before using the resulting time series
for classification.

\begin{figure}[t]
\centering
\includegraphics[width=\linewidth, clip=true, trim=0em .4in 0em 0em]{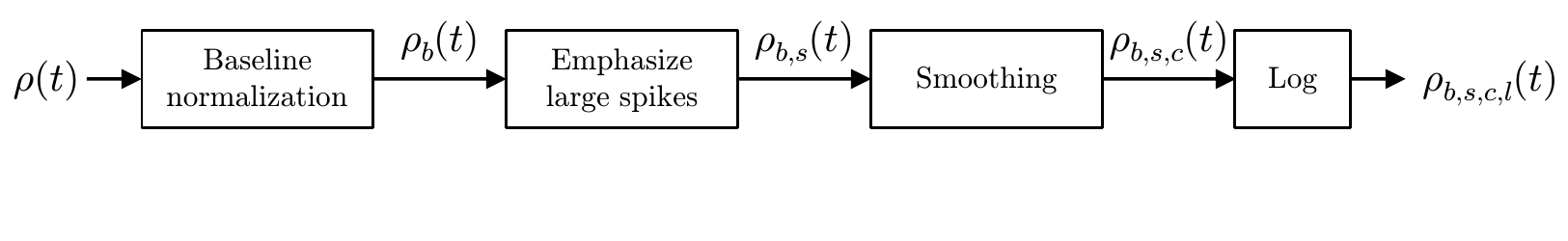}
\caption{Twitter data pre-processing pipeline:
  The raw rate $\rho(t)$ counts the number of Tweets in time bucket $t$.
  We normalize $\rho(t)$ to make the counts relative:
  $\rho_b(t) \triangleq \rho(t) / \sum_{\tau=1}^t \rho(\tau)$.
  Large spikes are emphasized: 
  $\rho_{b,s}(t) \triangleq | \rho_b(t) - \rho_b(t-1) |^{\alpha}$
  (we use $\alpha=1.2$).
  Next, we smooth the signal:
  $\rho_{b,s,c}(t)
   \triangleq \sum_{\tau = t - T_{smooth} + 1}^t \rho_{b,s}(\tau)$.
  Finally, we take the log:
  $\rho_{b,s,c,l}(t) \triangleq \log \rho_{b,s,c}(t)$.}
\label{fig:twitter-preprocessing}
\end{figure}

We observed
that trending activity is characterized by spikes above some baseline rate,
whereas non-trending activity has fewer, if any spikes. For example, a
non-trending topic such as ``city'' has a very high, but mostly constant rate
because it is a common word. In contrast, soon-to-be-trending topics
like ``Miss USA'' will initially have a low rate, but will also have bursts in
activity as the news spreads. To emphasize the parts of the rate signal above
the baseline and de-emphasize the parts below the baseline, we define a
baseline-normalized signal
$\rho_b(t) \triangleq \rho(t) / \sum_{\tau=1}^t \rho(\tau)$.

A related observation is that the Tweet rate for a trending topic typically
contains larger and more sudden spikes than those of non-trending topics. We
reward such spikes by emphasizing them, while de-emphasizing smaller spikes.
To do so, we define a baseline-and-spike-normalized rate
$\rho_{b,s}(t) \triangleq | \rho_b(t) - \rho_b(t-1) |^{\alpha}$
in terms of the already baseline-normalized rate $\rho_b$; parameter
$\alpha \geq 1$ controls how much spikes are rewarded (we used $\alpha =
1.2$). In addition, we convolve the result with a smoothing window to
eliminate noise and effectively measure the volume of Tweets in a sliding
window of length
$T_{smooth}$:
$
\rho_{b,s,c}(t)
\triangleq \sum_{\tau = t - T_{smooth} + 1}^t \rho_{b,s}(\tau)
$.

Finally, the spread of a topic from person to person can be thought of as a
branching process in which a population of users ``affected'' by a topic grows
exponentially with time, with the exponent depending on the details of the
model \cite{Asur}. This intuition suggests using a logarithmic scaling for
the volume of Tweets: $\rho_{b,s,c,l}(t) \triangleq \log \rho_{b,s,c}(t)$.

The resulting time series $\rho_{b,s,c,l}$ contains data from the entire
window in which data was collected. To construct the sets of training time
series $\mathcal{R}_+$ and $\mathcal{R}_-$, we keep only a small $h$-hour
slice of representative activity $r$ for each topic. Namely, each of the final
time series $r$ used in the training data is truncated to only contain the $h$
hours of activity in the corresponding transformed time series
$\rho_{b,s,c,l}$. For time series corresponding to trending topics, these $h$
hours are taken from the time leading up to when the topic was first declared
by Twitter to be trending. For time series corresponding to non-trending
topics, the $h$-hour window of activity is sampled at random from all the
activity for the topic. We empirically found that how news topics become
trends tends to follow a finite number of patterns; a few examples of these
patterns are shown in Figure \ref{fig:clusters}.

\textbf{Experiment.} For a fixed choice of parameters, we randomly divided the
set of trends and non-trends into two halves, one for training and one for
testing. Weighted majority voting with the training data was used to classify
the test data. Per time series in the test data, we looked
within a window of $2h$ hours, centered at the trend onset for trends, and
sampled randomly for non-trends. We restrict detection to this time window to
avoid detecting earlier times that a topic became trending, if it trended
multiple times. We then measured the false positive rate (FPR), true positive rate (TPR),
and the time of detection if any. For trends, we computed how early or late
the detection was compared to the true trend onset. We explored the following
parameters: $h$, the length in hours of each example time series; $T$, the
number of initial time steps in the observed time series $s$ that we use for
classification; $\gamma$, the scaling parameter; $T_{smooth}$, the width of
the smoothing window. In all cases, constant $\Delta_{\max}$ in the decision
rule \eqref{eq:decision-rule-with-min} is set to be the maximum possible,
i.e., since observed signal $s$ has $T$ samples, we compare $s$ with all
$T$-sized chunks of each time series $r$ in training data.


For a variety of parameters, we detect 
trending topics before they appear on Twitter's trending topics list. Figure
\ref{fig:twitter-main}
\subref{fig:early} shows that for one such choice of parameters, we 
detect trending topics before Twitter does 79\% of the time, and when we do, we
detect them an average of 1.43 hours earlier. Furthermore, we achieve 
a TPR of 95\% and a FPR of 4\%.
Naturally, there are tradeoffs between the FPR, the TPR, and relative detection
time that depend on parameter settings. An
aggressive parameter setting will yield early detection and a high TPR, but
at the expense of a high FPR. A conservative parameter setting
will yield a low FPR, but at the expense of late detection and a low
TPR. An in-between setting can strike the right balance. We show this
tradeoff in two ways. First, by varying a single parameter at a time and
fixing the rest, we generated an ROC curve that
describes the tradeoff between FPR and TPR. Figure
\ref{fig:twitter-main}
\subref{fig:roc} shows the
envelope of all ROC curves, which can be interpreted as the best ``achievable''
ROC curve.
Second, we broke the results up by where they fall on the ROC curve --- top
(``aggressive''), bottom (``conservative''), and center (``in-between'') --- and
showed the distribution of early and late relative detection times for each
(Figure \ref{fig:twitter-main}\subref{fig:roc-early-late}).

We discuss some fine details of the experimental setup. Due to restrictions on
the Twitter data available, while we could determine whether a trending topic
is categorized as news based on user-curated lists of ``news'' people on
Twitter, we did not have such labels for individual Tweets. Thus, the example
time series that we use as training data contain Tweets that are both news and
non-news. We also reran our experiments using only non-news Tweets and found
similar results except that we do not detect trends as early as before;
however, weighted majority voting still detects trends in advance of Twitter
79\% of the time.

\end{document}